\documentclass{article}[11pt]
\usepackage{fullpage}
\usepackage{indentfirst}
\usepackage[utf8]{inputenc} 
\usepackage[T1]{fontenc}    
\usepackage{url}            
\usepackage{booktabs}       
\usepackage{amsfonts}       
\usepackage{nicefrac}       
\usepackage{microtype}      
\usepackage[ruled,vlined,linesnumbered]{algorithm2e}
\usepackage{amsmath}
\usepackage{amsthm}
\usepackage{verbatim}
\usepackage{enumerate}
\usepackage{color}
\usepackage[page,title,titletoc,header]{appendix}
\usepackage{tikz}
\usetikzlibrary{arrows,automata}

\usepackage{tikz,tikz-qtree}
\usepackage{colortbl}
\usepackage{multirow}
\usepackage{array}
\usepackage{wrapfig}
\newcommand{\lijie}[1]{{\color{red} Lijie: #1}}
\newcommand{\jian}[1]{{\color{blue} Jian: #1}}
\newcommand{\mingda}[1]{{\color{purple} Mingda: #1}}

\newcommand{\algo}{\textsf{Bilateral-Elimination}}
\newcommand{\arm}{A}
\newcommand{\armp}{\arm'}
\newcommand{\bestarm}{Best-$1$-Arm}
\newcommand{\bestkarms}{Best-$k$-Arm}
\newcommand{\CORRECT}{$\delta$-correct}
\newcommand{\eat}[1]{{}}
\newcommand{\eps}{\varepsilon}
\newcommand{\event}{\mathcal{E}}
\newcommand{\Ex}{\mathrm{E}}
\newcommand{\Gap}[1]{\Delta_{[#1]}}
\newcommand{\badevent}{\event^\textsf{bad}}
\newcommand{\goodevent}{\event^\textsf{good}}
\newcommand{\validevent}{\event^\textsf{valid}}
\newcommand{\KL}{\mathrm{KL}}
\newcommand{\marm}{M}
\newcommand{\Mean}[1]{\mu_{[#1]}}
\newcommand{\normal}{\mathcal{N}}
\newcommand{\argmax}{\textrm{argmax}}

\newcommand{\alg}{\mathbb{A}}
\newcommand{\algemb}{\alg^{\textsf{emb}}}

\newcommand{\algnew}{\alg^{\textsf{new}}}
\newcommand{\algsym}{\alg^{\textsf{sym}}}
\newcommand{\inst}{\mathcal{I}}
\newcommand{\instden}{\inst^{\textsf{dense}}}
\newcommand{\instemb}{\inst^{\textsf{emb}}}
\newcommand{\instmix}{\inst^{\textsf{mix}}}
\newcommand{\instnew}{\inst^{\textsf{new}}}
\newcommand{\instp}{\inst'}
\newcommand{\instsym}{\inst^{\textsf{sym}}}
\newcommand{\exmix}{\textrm{Expr}^\textsf{mix}}
\newcommand{\exsym}{\textrm{Expr}^\textsf{sym}}

\newcommand{\gbig}{G^{\textsf{large}}}
\newcommand{\gsma}{G^{\textsf{small}}}
\newcommand{\hbig}{H^{\textsf{large}}}
\newcommand{\hsma}{H^{\textsf{small}}}
\newcommand{\htil}{\widetilde H}
\newcommand{\htilbig}{\widetilde H^{\textsf{large}}}
\newcommand{\htilsma}{\widetilde H^{\textsf{small}}}
\newcommand{\Hone}{H^{(1)}}
\newcommand{\Htwobig}{H^{(2,\textsf{large})}}
\newcommand{\Htwosma}{H^{(2,\textsf{small})}}
\newcommand{\kbig}{k^\textsf{large}}
\newcommand{\ksma}{k^\textsf{small}}
\newcommand{\sbig}{S^\textsf{large}}
\newcommand{\ssma}{S^\textsf{small}}
\newcommand{\thetabig}{\theta^\textsf{large}}
\newcommand{\thetasma}{\theta^\textsf{small}}
\newcommand{\mubig}{\mu^\textsf{large}}
\newcommand{\musma}{\mu^\textsf{small}}
\newcommand{\deltap}{\delta'}
\newcommand{\ubig}{U^\textsf{large}}
\newcommand{\usma}{U^\textsf{small}}
\newcommand{\obs}{\mathrm{obs}}

\newcommand{\pdf}[2]{f_{#1}(#2)}
\newcommand{\pos}{p^*}
\newcommand{\Long}{\mathsf{Long}}
\newcommand{\Short}{\mathsf{Short}}

\newcommand{\range}{R}

\newcommand{\PAC}{\textsf{PAC-Best-k}}
\newcommand{\PACSP}{\textsf{PAC-SamplePrune}}
\newcommand{\ME}{\textsf{EstMean}}
\newcommand{\MEbig}{\textsf{EstMean-Large}}
\newcommand{\MEsma}{\textsf{EstMean-Small}}
\newcommand{\EL}{\textsf{Elim}}
\newcommand{\ELbig}{\textsf{Elim-Large}}
\newcommand{\ELsma}{\textsf{Elim-Small}}

\newtheorem{Lemma}{Lemma}[section]
\newtheorem{Theorem}{Theorem}[section]
\newtheorem{Definition}{Definition}[section]

\newtheorem{Observation}{Observation}[section]

\newtheorem{Fact}{Fact}[section]

%
%

\title{Nearly Instance Optimal Sample Complexity Bounds \\ for 
	Top-k Arm Selection}

\author{
	Lijie Chen \qquad\qquad Jian Li \qquad\qquad Mingda Qiao\\
 	Institute for Interdisciplinary Information Sciences (IIIS), Tsinghua University
}

\date{}

\begin{document}

%

%



\maketitle

\begin{abstract}
In the \bestkarms{} problem, we are given $n$ stochastic bandit arms, each associated with an unknown reward distribution. We are required to identify the $k$ arms with the largest means by taking as few samples as possible. 
In this paper, we make progress towards a complete characterization of the instance-wise sample complexity bounds for the \bestkarms{} problem. On the lower bound side, we obtain a novel complexity term to measure the sample complexity that every \bestkarms{} instance requires. 
This is derived by an interesting and nontrivial reduction from the \bestarm{} problem.
We also provide an elimination-based algorithm that matches the instance-wise lower bound within doubly-logarithmic factors. 
The sample complexity of our algorithm strictly dominates the state-of-the-art for \bestkarms{} (module constant factors).
\end{abstract}

\section{INTRODUCTION}

The stochastic multi-armed bandit is a classical and well-studied model for characterizing the exploration-exploitation tradeoff in various decision-making problems in stochastic settings. 
The most well-known objective in the multi-armed bandit model is to maximize the cumulative gain (or equivalently, to minimize the cumulative regret) that the agent achieves.
Another line of research, called the pure exploration multi-armed bandit problem,
which is motivated by a variety of practical applications including medical trials~\cite{robbins1985some,audibert2010best}, communication
network~\cite{audibert2010best}, and crowdsourcing~\cite{zhou2014optimal,cao2015top}, 
has also attracted significant attention recently.
In the pure exploration problem, the agent draws samples from the arms adaptively (the exploration phase), and finally commits to one of the feasible solutions specified by the problem. In a sense, the exploitation phase in the pure exploration problem simply consists of exploiting the solution to which the agent commits indefinitely. Therefore, the agent's objective is to identify the optimal (or near-optimal) feasible solution with high probability.

In this paper, we focus on the problem of identifying the top-$k$ arms 
(i.e., the $k$ arms with the largest means)
in a stochastic multi-armed bandit model.
The problem is known as the \bestkarms{} problem, and has been extensively studied in the past decade~\cite{kalyanakrishnan2010efficient,  gabillon2012best, gabillon2011multi, kalyanakrishnan2012pac, bubeck2012multiple, kaufmann2013information, zhou2014optimal, kaufmann2015complexity,simchowitz2016towards}. 
We formally define the \bestkarms{} problem as follows.

\begin{Definition}[\bestkarms{}]\label{def:bestkarms}
An instance of \bestkarms{} is a set of stochastic arms $\inst = \{A_1,A_2,\ldots,A_n\}$. Each arm has a $1$-sub-Gaussian reward distribution with an unknown mean in $[0,1/2]$. 

At each step, algorithm $\alg$ chooses an arm and observes an i.i.d. sample from its reward distribution. The goal of $\alg$ is to identify the $k$ arms with the largest means in $\inst$ using as few samples as possible. Let $\Mean{i}$ denote the $i$-th largest mean in an instance of \bestkarms{}. We assume that $\Mean{k} > \Mean{k+1}$ in order to ensure the uniqueness of the solution.
\end{Definition}

Note that in our upper bound, we assume that all reward distributions are 
$1$-sub-Gaussian\footnote{
	A distribution $\mathcal{D}$ is $\sigma$-sub-Gaussian, if it holds that
	$\Ex_{X\sim\mathcal{D}}[\exp(tX-t\Ex_{X\sim\mathcal{D}}[X])]\le\exp(\sigma^2t^2/2)$
	for all $t\in\mathbb{R}$.
}, which is a standard assumption in multi-armed bandit literature.
In our lower bound (Theorem~\ref{theo:lb}), however, we assume that all reward distributions are Gaussian with unit variance.\footnote{For arbitrary distributions, one may 
be able to distinguish two distributions with very close means using very few samples. 
It is impossible to establish a nontrivial lower bound in such generality.
\eat{
	\jian{this is somewhat true. one may have a lower bound expressed in KL..
	we can probably keep the footnote and see if we get any complaint...}
	}} 

When we only want to identify the single best arm, we get the following \bestarm{} problem, which is a well-studied special case of \bestkarms{}.
The problem plays an important role in our lower bound for \bestkarms{}.

\begin{Definition}[\bestarm{}]
The \bestarm{} problem is a special case of \bestkarms{} where $k=1$.
\end{Definition}

Generally, we focus on algorithms that solve \bestkarms{} with probability at least $1 - \delta$.

\begin{Definition}[\CORRECT{} Algorithms]
$\alg$ is a \CORRECT{} algorithm for \bestkarms{} if and only if $\alg$ returns the correct answer with probability at least $1 - \delta$ on every \bestkarms{} instance $\inst$.
\end{Definition}

\subsection{Our Results}

Before stating our results on the \bestkarms{} problem, we first define a few useful notations that characterize the hardness of \bestkarms{} instances.

\subsubsection{Notations}
\textbf{Means and gaps.} Let $\mu_{\arm}$ denote the mean of arm $\arm$. $\Mean{i}$ denotes the $i$-th largest mean among all arms in a specific instance. We define the \textit{gap} of arm $\arm$ as
$$\Delta_{\arm} = \begin{cases}
\mu_{\arm} - \Mean{k+1}, & \mu_{\arm} \ge \Mean{k},\\
\Mean{k} - \mu_{\arm}, & \mu_{\arm} \le \Mean{k+1}.
\end{cases}$$
Note that the gap of an arm is the minimum value by which its mean needs to change in order to alter the top $k$ arms. We let $\Gap{i}$ denote the gap of the $i$-th largest arm.

\eat{\textbf{Accuracy levels.} Throught this paper, $\eps_r=2^{-r}$ serves as the accuracy level we use in the $r$-th round of our algorithm, and it is also used to define the arm groups in the following.}

\textbf{Arm groups.} Let $\eps_r$ denote $2^{-r}$. For an instance $\inst$ of \bestkarms{} and positive integer $r$, we define the arm groups as
$$\gbig_r = \{A\in\inst:\mu_{A}\ge\Mean{k},\Delta_{A}\in(\eps_{r+1},\eps_r]\}\text{, and}$$
$$\gsma_r = \{A\in\inst:\mu_{A}\le\Mean{k+1},\Delta_{A}\in(\eps_{r+1},\eps_r]\}\text{.}$$
In other words, $\gbig_r$ and $\gsma_r$ contain the arms with gaps in $(\eps_{r+1},\eps_r]$ among and outside the best $k$ arms, respectively.

Note that since we assume that the mean of each arm is in $[0,1/2]$, the gap of every arm is at most $1/2$. Therefore by definition each arm is contained in one of the arm groups.

We also use the following shorthand notations:
$$\gbig_{\ge r} = \bigcup_{i = r}^{\infty}\gbig_i\text{ and }\gsma_{\ge r} = \bigcup_{i = r}^{\infty}\gsma_i\text{.}$$

\subsubsection{Lower Bound}
In order to state our instance-wise lower bound precisely, we need to 
elaborate what is an instance.
By Definition \ref{def:bestkarms}, a given instance is a {\em set} of arms,
meaning the particular input order of the arms should not matter.
Note that there indeed exists algorithms that take advantage of the input order
and may perform better for some ``lucky'' input orders than the others.\footnote{
	For example, a sorting algorithm can first check if the input sequence $a_1,\ldots,a_n$
	is in increasing order in $O(n)$ time, and then run an $O(n\log n)$ time algorithm.
	This algorithm is particularly fast for a particular input order. 
	}
In order to prove a tighter lower bound, we need to consider all possible input orders
and take the average. 
From technical perspective, we use the following definition of an instance.

\begin{Definition}[Instance]\label{def:instance}
An instance is considered as a random permutation
of a sequence of arms.
Consequently, the sample complexity of an algorithm on an instance should be considered as 
the average of the number of samples over all permutations.
\end{Definition}

In fact, the random permutation is crucial to establishing \textit{instance-wise} lower bounds for \bestkarms{} (i.e., the minimum number of samples that \textit{every} \CORRECT{} algorithm for \bestkarms{} needs to take on an instance). Without the random permutation, the algorithm might
use fewer samples on some ``lucky" permutations than on others, and it is impossible to prove a tight instance-wise lower bound as ours. 
The use of random permutation to define instance-wise lower bounds is also used in computational geometry~\cite{afshani2009instance} and the \bestarm{} problem~\cite{chen2015optimal, chen2016open}.

We say that an instance of \bestkarms{} is Gaussian, if all reward distributions are normal distributions with unit variance.

\begin{Theorem}\label{theo:lb}
There exists a constant $\delta_0 > 0$, such that for any $\delta < \delta_0$, every \CORRECT{} algorithm for \bestkarms{} takes
$$\Omega \left( H\ln\delta^{-1} + \hbig + \hsma \right)$$
samples in expectation on every Gaussian instance.
Here $H = \sum_{i=1}^{n}\Gap{i}^{-2}\text{,}$
$$\hbig = \sum_{i=1}^{\infty}\left|\gbig_i\right|\cdot\max_{j\le i}\eps_j^{-2}\ln\left|\gsma_{\ge j}\right|\text{, and}$$
$$\hsma = \sum_{i=1}^{\infty}\left|\gsma_i\right|\cdot\max_{j\le i}\eps_j^{-2}\ln\left|\gbig_{\ge j}\right|\text{.}$$
\end{Theorem}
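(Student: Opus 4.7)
The plan is to lower bound the three summands $H\ln\delta^{-1}$, $\hbig$, and $\hsma$ separately; their sum then follows up to a constant factor by taking a maximum. For the classical term $H\ln\delta^{-1}$, I would invoke the Kaufmann--Cappé--Garivier transportation framework. For each arm $\arm$, define a twin instance $\inst_\arm$ by shifting $\mu_\arm$ by $2\Gap{\arm}$ in the direction that swaps $\arm$ across the top-$k$ boundary; any \CORRECT{} algorithm must output different answers on $\inst$ and $\inst_\arm$ with probability at least $1-2\delta$, so the Gaussian KL bound forces $\Omega(\Gap{\arm}^{-2}\ln\delta^{-1})$ expected pulls of $\arm$. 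Summing over all arms yields $\Omega(H\ln\delta^{-1})$.

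The main content of the theorem lies in $\hbig$ (and, symmetrically, $\hsma$), which I would establish via a layered reduction from \bestarm{}. Fix $j\ge 1$ and an arm $\arm^*\in\gbig_{\ge j}$, and consider the restricted identification problem of certifying $\arm^*$'s status using only the arms in $\{\arm^*\}\cup\gsma_{\ge j}$. The gap between $\arm^*$ and every small competitor is at least $\eps_{j+1}$, so structurally this sub-problem is a \bestarm{} instance at accuracy $\eps_j$ on $|\gsma_{\ge j}|+1$ arms. The random-permutation convention in Definition~\ref{def:instance} is crucial: because the algorithm has no a priori information about which arm plays the role of $\arm^*$, it genuinely faces a search over $|\gsma_{\ge j}|+1$ candidates, and a change-of-measure argument in the style of Mannor--Tsitsiklis (together with the instance-wise refinement of Chen--Li~\cite{chen2015optimal}) forces $\Omega(\eps_j^{-2}\ln|\gsma_{\ge j}|)$ expected pulls \emph{on $\arm^*$ itself}. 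Taking the worst $j\le i$ for each $\arm^*\in\gbig_i$ and summing over large arms gives $\hbig$; the $\hsma$ bound is obtained by exchanging the roles of large and small arms.

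The hard step is making the reduction in the second part rigorous. The subtlety is that the \bestkarms{} algorithm has access to all $n$ arms, not just the sub-instance $\{\arm^*\}\cup\gsma_{\ge j}$, and could in principle use pulls on arms with much larger gaps to narrow down $\arm^*$'s identity without ``paying'' for them in the sub-instance budget. To preclude this, I would argue by coupling: pulls on arms outside $\{\arm^*\}\cup\gsma_{\ge j}$ are information-independent of the labels inside the sub-instance under the random permutation, so conditioning on them only preserves or increases the remaining uncertainty. Combined with a symmetrization over the permutation, which spreads the algorithm's confusion uniformly across all $|\gsma_{\ge j}|+1$ candidates, this yields the per-arm $\ln|\gsma_{\ge j}|$ factor that distinguishes our bound from previous work. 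Verifying this ``information-locality'' carefully, and arranging the accounting so that each $\arm^*\in\gbig_i$ contributes its own $\eps_j^{-2}\ln|\gsma_{\ge j}|$ pulls without double-counting across different choices of $\arm^*$ or $j$, is the central technical challenge and constitutes the ``nontrivial reduction'' advertised in the abstract.
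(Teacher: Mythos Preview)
Your high-level decomposition matches the paper's: the $H\ln\delta^{-1}$ term is standard, and for $\hbig$ you lower-bound the pulls of each $\arm^*\in\gbig_i$ by $\Omega(\eps_j^{-2}\ln|\gsma_{\ge j}|)$ via a \bestarm{} sub-instance on $\{\arm^*\}\cup\gsma_{\ge j}$. Two points, however, diverge from the paper.

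First, the reduction you call ``information-locality'' is far simpler than you anticipate. The paper's Instance Embedding (Lemma~\ref{lem:InstEmb}) converts the \bestkarms{} algorithm into a \bestarm{} algorithm on the sub-instance by \emph{simulating} the missing arms with fictitious samples generated internally. Since those fictitious samples carry no information about the sub-instance, your concern that ``pulls on arms with much larger gaps could narrow down $\arm^*$'s identity'' evaporates; no coupling or conditioning argument is needed, and no double-counting issue arises because you are bounding pulls on $\arm^*$ alone.

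Second, and more substantively, your proposal skips the step the paper actually identifies as the ``nontrivial reduction'': proving that a \bestarm{} algorithm must pull the optimal arm $\Omega(\Delta^{-2}\ln n)$ times when the $n$ suboptimal arms have \emph{arbitrary, distinct} means in $[\mu-\Delta,\mu)$. Your symmetrization-over-permutation idea works cleanly only when all suboptimal arms are identical (this is Lemma~\ref{lem:symlb}: lowering $\mu_{\arm^*}$ to $\mu-\Delta$ makes all $n+1$ arms indistinguishable, so $\Pr[\text{return }\arm^*]\le 1/(n+1)$ and change of distribution yields the $\ln n$). When the competitors in $\gsma_{\ge j}$ have different means---as they generically do---this symmetry is broken and the argument fails as stated: lowering $\mu_{\arm^*}$ does not make it exchangeable with the rest, so you cannot conclude the $1/(n+1)$ bound. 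The paper bridges this gap (Lemma~\ref{lem:genlb}) by a pigeonhole step extracting $n^{0.1}$ competitors with nearly identical means, followed by a delicate ``mixed-arm'' construction that couples the resulting dense sub-instance to a truly symmetric one via change of distribution. This is the technical heart of the lower bound and is absent from your plan; citing Mannor--Tsitsiklis or \cite{chen2015optimal} does not supply it, since neither gives a per-arm $\ln n$ bound for heterogeneous competitors.
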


We notice that Simchowitz et al.~\cite{simchowitz2016towards}, independently of our work, derived instance-wise lower bounds for \bestkarms{} similar to Theorem \ref{theo:lb}, using a somewhat different method.

\eat{The complexity term $H$ is well-understood in \bestkarms{} literature, while the other two terms are novel. Note that for $i\ge j$, the algorithm needs to distinguish every arm $\arm\in\gbig_i$ from the arms in $\gsma_{\ge j}$. For this purpose, it is necessary to obtain an $O(\eps_j)$ approximation of the means of both $\arm$ and the other $|\gsma_{\ge j}|$ arms simultaneously. Intuitively, the union bound indicates that $\arm$ must be sampled with $O(\eps_j)$ accuracy and a confidence of $1 - O(1 / |\gsma_{\ge j}|)$, which incurs a sample complexity of $\eps_j^{-2}\ln|\gsma_{\ge j}|$. Then $\hbig$ is simply obtained by taking the maximum among all $j$, and sum over all arm groups $\gbig_i$. The $\hsma$ term is symmetric.}

\subsubsection{Upper Bound}

\begin{Theorem}\label{theo:ub}
For all $\delta > 0$, there is a \CORRECT{} algorithm for \bestkarms{} that takes
$$O\left( H\ln\delta^{-1} + \htil + \htilbig + \htilsma \right)$$
samples in expectation on every instance. 
Here
$$\htil = \sum_{i=1}^{n}\Gap{i}^{-2}\ln\ln\Gap{i}^{-1}\text{,}$$
$$\htilbig = \sum_{i=1}^{\infty}\left|\gbig_i\right|\sum_{j=1}^{i}\eps_j^{-2}\ln\left|\gsma_j\right|\text{, and}$$
$$\htilsma = \sum_{i=1}^{\infty}\left|\gsma_i\right|\sum_{j=1}^{i}\eps_j^{-2}\ln\left|\gbig_j\right|\text{.}$$
\end{Theorem}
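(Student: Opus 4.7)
The plan is to design a bilateral elimination-style algorithm \algo{} that operates in geometric rounds $r = 1, 2, \ldots$ with per-round accuracy $\eps_r = 2^{-r}$, maintaining two shrinking candidate pools: $\ubig_r$ of arms not yet confirmed to be outside the top $k$, and $\usma_r$ of arms not yet confirmed to be inside the top $k$. In each round, every undecided arm receives enough fresh samples to yield an $\eps_r$-accurate empirical mean, and arms whose confidence intervals fall strictly above or below an empirical $k$-th-largest threshold are confirmed and removed. This is the natural symmetric generalisation of successive elimination to \bestkarms{}.

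The key design choice is an asymmetric per-arm, per-round failure budget of $\Theta(\delta/r^2 + 1/(r^2 |\usma_r|))$ for each $\arm \in \ubig_r$, and symmetrically $\Theta(\delta/r^2 + 1/(r^2 |\ubig_r|))$ for each $\arm \in \usma_r$. A union bound across all rounds and arms, combined with $\sum_r r^{-2} = O(1)$, shows that the good event $\event$ --- every empirical mean lies within its advertised confidence interval --- holds with probability at least $1 - O(\delta)$, which establishes \CORRECT{}ness after absorbing constants into the initial choice of failure target. Under $\event$, every arm $\arm$ with $\Delta_\arm \in (\eps_{r+1}, \eps_r]$ is confirmed by the end of round $r$, because an $\eps_r$-accurate estimate separates $\mu_\arm$ from the $k$-th-largest threshold (itself estimated to $\eps_r$ accuracy) by at least $\Omega(\eps_r)$.

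For sample complexity, the per-round cost for an arm $\arm \in \ubig_r$ is $O(\eps_r^{-2}(\ln\delta^{-1} + \ln r + \ln|\usma_r|))$. Summing over the $r_\arm$ rounds during which $\arm$ survives, the $\ln\delta^{-1}$ part telescopes via $\sum_{r \le r_\arm} \eps_r^{-2} = O(\Delta_\arm^{-2})$ into $O(\Delta_\arm^{-2}\ln\delta^{-1})$, and the $\ln r$ part telescopes into $O(\Delta_\arm^{-2}\ln r_\arm) = O(\Delta_\arm^{-2}\ln\ln\Delta_\arm^{-1})$; summed over all arms these account exactly for the $H\ln\delta^{-1}$ and $\htil$ contributions. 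The remaining $\ln|\usma_r|$ part, grouped first over $\arm \in \gbig_{r_0}$ and then over $r_0$, produces the $\htilbig$ term --- provided the algorithm ensures $|\usma_r| = O(|\gsma_r|)$ rather than the naive $O(|\gsma_{\ge r}|)$; the $\htilsma$ term follows symmetrically.

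The main obstacle is achieving the sharp bound $|\usma_r| = O(|\gsma_r|)$. A naive single-pool elimination inflates $|\usma_r|$ by all arms of gap level beyond $r$ that remain unresolved, which would only yield the coarser bound in terms of $|\gsma_{\ge r}|$. The remedy is to refine \algo{} into a collection of coupled subroutines --- one per gap level $j$ --- each responsible for resolving arms whose estimated gap is $\Theta(\eps_j)$, using confidence tuned to that level's bucket size alone; the subroutines are scheduled in parallel across rounds with their failure probabilities multiplexed via the $1/r^2$ weighting. The remaining delicate bookkeeping is to (i) correctly assign each arm to its level-$j$ bucket using coarse preliminary estimates (handled by a slack factor in the bucket boundaries and a one-round grace period), (ii) prove under $\event$ that each level-$j$ bucket has size $O(|\gsma_j|)$ (resp.\ $O(|\gbig_j|)$), and (iii) verify that all failure probabilities across rounds, arms, and buckets still sum to $O(\delta)$. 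Once these pieces are assembled, summing the per-arm contributions and regrouping by gap level produces the claimed bound $O(H\ln\delta^{-1} + \htil + \htilbig + \htilsma)$.
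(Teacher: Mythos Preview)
Your high-level plan (bilateral elimination in geometric rounds with accuracy $\eps_r$) matches the paper's \algo{}, but the proposal has two genuine gaps.

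\textbf{The ``main obstacle'' is a red herring caused by a typo in the statement.} The theorem as printed has $\ln|\gsma_j|$ in $\htilbig$, but every proof in the paper (the sample-complexity calculation in Lemma~\ref{lem:sample} and equation~\eqref{eq:htilbig} in the proof of Theorem~\ref{theo:ubrefine}) actually uses
\[
\htilbig \;=\; \sum_{i\ge 1}|\gbig_i|\sum_{j=1}^{i}\eps_j^{-2}\ln|\gsma_{\ge j}|.
\]
With this corrected target, the bound you dismiss as ``naive'' --- that the surviving small-side pool at round $r$ has size $O(|\gsma_{\ge r}|)$ --- is precisely what is required. Your stronger goal $|\usma_r|=O(|\gsma_r|)$ is in fact unachievable: every arm in $\gsma_{>r}$ has gap at most $\eps_{r+1}$ and cannot be resolved at accuracy $\eps_r$, so it must still be present. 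The per-gap-level bucketing scheme you sketch to reach that goal is therefore both unnecessary and circular (you cannot place an arm in bucket $j$ before sampling it to accuracy $\eps_j$).

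\textbf{The correctness budget and the asymmetric logarithm.} Your stated per-arm budget $\Theta(\delta/r^2 + 1/(r^2|\usma_r|))$ for $\arm\in\ubig_r$ does not union-bound to $O(\delta)$: summing the $1/(r^2|\usma_r|)$ portion over $|\ubig_r|$ arms gives $|\ubig_r|/(r^2|\usma_r|)$, which can be $\Omega(1)$ regardless of $\delta$. If you retreat to the symmetric budget $\delta/(r^2|S_r|)$, correctness holds but every arm pays $\ln|S_r|=\ln(|\gbig_{\ge r}|+|\gsma_{\ge r}|)$, which introduces cross-terms like $\sum_i|\gbig_i|\sum_{j\le i}\eps_j^{-2}\ln|\gbig_{\ge j}|$ that are \emph{not} dominated by $\htilbig+\htilsma$. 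The paper obtains the asymmetric logarithm by a different mechanism: each round calls a PAC-\bestkarms{} subroutine (from \cite{chen2016pure}) whose cost is $O(|S_r|\eps_r^{-2}[\ln\delta_r^{-1}+\ln\min(\kbig_r,\ksma_r)])$, and then uses $(a+b)\ln\min(a,b)\le a\ln b+b\ln a$ to split into the $\htilbig$ and $\htilsma$ pieces. Correctness is not a union bound over all arms; the PAC step can misclassify at most $\min(\kbig_r,\ksma_r)$ arms, so the elimination confidence is set to $\delta_r/\min(\kbig_r,\ksma_r)$ and the union bound runs only over those. Together with the bound $\kbig_r\le 2|\gbig_{\ge r}|$, $\ksma_r\le 2|\gsma_{\ge r}|$ (Observation~\ref{obs:kbound}, proved via an \EL{} subroutine from \cite{chen2015optimal} that guarantees only a constant fraction of survivors lie above threshold), this yields the claimed bound without any per-level bucketing.
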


The following theorem relates the $\htilbig$ and $\htilsma$ terms to $\hbig$ and $\hsma$ in the lower bound.

\begin{Theorem}\label{theo:ubrefine}
For every \bestkarms{} instance,
 the following statements hold:
\begin{enumerate}
\item
$\htilbig + \htilsma = O\left(\left(\hbig+\hsma\right)\ln\ln n\right)\text{.}$
\item
$\htilbig + \htilsma = O\left(H\ln k\right)\text{.}$
\end{enumerate}
\end{Theorem}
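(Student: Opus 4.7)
The plan is to bound $\htilbig$ and $\htilsma$ separately; by the symmetric roles played by the two terms, it is enough to describe the argument for $\htilbig$ in each part.

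For Part 1, I first use the monotone observation $\ln|\gsma_j|\le\ln|\gsma_{\ge j}|$ to pass from the algorithm-side log to the lower-bound log, so that it suffices to prove
$$\sum_{j=1}^{i}\eps_j^{-2}\ln|\gsma_{\ge j}| = O(\log\log n)\cdot\max_{j\le i}\eps_j^{-2}\ln|\gsma_{\ge j}|\text{.}$$
Since $|\gsma_{\ge j}|$ is non-increasing in $j$ and bounded by $n$, the quantity $\ln|\gsma_{\ge j}|$ lies in $[0,\ln n]$ and its binary logarithm takes only $O(\log\log n)$ distinct integer values. I would bucket the indices $j\in\{1,\ldots,i\}$ according to $\lfloor\log_2\ln|\gsma_{\ge j}|\rfloor$. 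On a single bucket the log factor varies within a factor of two, the indices form a contiguous interval, and $\eps_j^{-2}=4^j$ is a geometric progression dominated by its last term. A direct calculation then shows that each bucket contributes at most a constant multiple of the right-hand max, and multiplying by $|\gbig_i|$ and summing over $i$ gives $\htilbig=O(\log\log n)\cdot\hbig$; the same argument on the mirror side yields $\htilsma=O(\log\log n)\cdot\hsma$.

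For Part 2, the $\htilsma$ side is easy: since $\gbig_j$ is a subset of the top-$k$ arms, $\ln|\gbig_j|\le\ln k$ uniformly, and combining the geometric identity $\sum_{j=1}^{i}\eps_j^{-2}=O(\eps_i^{-2})$ with $|\gsma_i|\eps_i^{-2}\le\sum_{A\in\gsma_i}\Delta_A^{-2}$ (valid since $\Delta_A\le\eps_i$ for $A\in\gsma_i$) immediately yields $\htilsma=O(H\ln k)$. The $\htilbig$ side is more delicate because $|\gsma_j|$ can be as large as $n-k$, so the naive bound $\ln|\gsma_j|\le\ln n$ only gives $O(H\ln n)$. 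To squeeze $\ln n$ down to $\ln k$, I split
$$\ln|\gsma_j| = \min(\ln|\gsma_j|,\ln k) + \max(0,\ln(|\gsma_j|/k))\text{.}$$
The first summand contributes $O(H\ln k)$ by exactly the argument used for $\htilsma$. For the second summand I swap the order of summation to isolate $|\gbig_{\ge j}|\le k$, and then invoke the elementary inequality $(\ln x)/x\le 1/e$ for $x>1$, applied with $x=|\gsma_j|/k$, to obtain
$$|\gbig_{\ge j}|\cdot\max(0,\ln(|\gsma_j|/k)) \le k\ln(|\gsma_j|/k) \le |\gsma_j|/e\text{.}$$
This converts the excess logarithm into a linear charge on the small arms, so its total contribution is bounded by $\sum_j\eps_j^{-2}|\gsma_j|/e = O(\sum_{A:\mu_A\le\Mean{k+1}}\Delta_A^{-2}) = O(H)$.

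The main obstacle is precisely this $\htilbig$ calculation in Part 2: without the trick of trading the cap $|\gbig_{\ge j}|\le k$ against the concavity of the logarithm via $(\ln x)/x\le 1/e$, one can only obtain $O(H\ln n)$, which would not close the gap when $k\ll n$. Part 1 is largely routine once one recognizes that dyadic bucketing on $\ln|\gsma_{\ge j}|$ collapses the sum onto $O(\log\log n)$ copies of the maximum, and the $\htilsma$ half of Part 2 is immediate from the trivial bound $|\gbig_j|\le k$.
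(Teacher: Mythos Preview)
Your proposal is correct. For Part~1, your dyadic bucketing on $\lfloor\log_2\ln|\gsma_{\ge j}|\rfloor$ is essentially a repackaging of the paper's argument: the paper instead locates $r^*=\argmax_{r\le i}\eps_r^{-2}\ln|\gsma_{\ge r}|$ and shows directly, via the inequalities $4^{i-r^*}\ln|\gsma_{\ge i}|\le\ln n$ and $2^{r^*-r_1}\ln|\gsma_{\ge r^*}|\le\ln n$, that only an interval of length $O(\ln\ln n)$ around $r^*$ fails to decay geometrically. Both arguments are driven by the same two facts (monotonicity of $|\gsma_{\ge j}|$ and the $\ln n$ cap), and neither is materially simpler than the other.

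For Part~2 the routes diverge more visibly. The paper swaps the summation once to obtain $\htilbig+\htilsma=\sum_i\eps_i^{-2}\bigl(|\gbig_{\ge i}|\ln|\gsma_{\ge i}|+|\gsma_{\ge i}|\ln|\gbig_{\ge i}|\bigr)$ and then applies the symmetric inequality $x\ln y+y\ln x\le(x+y)(2\ln\min(x,y)+1)$, combined with $\min(|\gbig_{\ge i}|,|\gsma_{\ge i}|)\le k$, to finish in one stroke. You instead handle the two halves asymmetrically: $\htilsma$ is immediate from $|\gbig_j|\le k$, while for $\htilbig$ you split $\ln|\gsma_j|$ at level $\ln k$ and discharge the excess via $k\ln(|\gsma_j|/k)\le|\gsma_j|/e$. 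The underlying inequality is the same (both reduce to $-s\ln s\le 1/e$), so the two proofs are equivalent in strength; your version makes the role of the cap $|\gbig_{\ge j}|\le k$ more explicit, while the paper's version is slightly more compact by treating the two sides at once.
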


Combining Theorems \ref{theo:lb}, \ref{theo:ub} and \ref{theo:ubrefine}(1), 
our algorithm is \textit{instance-wise optimal} within doubly-logarithmic factors (i.e.,
$\ln\ln n, \ln\ln\Gap{i}^{-1}$). In other words, the sample complexity of our algorithm on \textit{every} single instance nearly matches the minimum number of samples that \textit{every} \CORRECT{} algorithm has to take on that instance.
\eat{\jian{the following two sentences look like a proof of thm 3. but they are not.
	consider removing them.}
In particular, the $\htil$ term is upper bounded by $H\cdot\ln\ln\Gap{k}^{-1}$, where $\Omega(H)$ (in fact, $\Omega(H\ln\delta^{-1})$) is an instance-wise lower bound as shown in \cite{chen2014combinatorial}. The $\htilbig + \htilsma$ term is bounded by $\left(\hbig+\hsma\right)\cdot\ln\ln n$ according to Theorem \ref{theo:ubrefine}.}

Theorem \ref{theo:ub} and Theorem \ref{theo:ubrefine}(2) also imply that our algorithm strictly dominates the state-of-the-art algorithm for \bestkarms{} obtained in \cite{chen2016pure}, which achieves a sample complexity of
\begin{equation*}\begin{split}
&O\left(\sum_{i=1}^{n}\Gap{i}^{-2}\left(\ln\delta^{-1} + \ln k + \ln\ln\Gap{i}^{-1}\right)\right)\\
=& O\left(H\ln\delta^{-1} + H\ln k + \htil\right)\text{.}
\end{split}\end{equation*}

In particular, we give a specific example in Appendix A in which the sample complexity achieved by
Theorem \ref{theo:ub}
is significantly better than that obtained in \cite{chen2016pure}. See Table \ref{tab:prev} for more previous upper bounds on the sample complexity of \bestkarms{}.

\begin{table}[t]
\caption{Upper Bounds of \bestkarms{}} \label{tab:prev}
\begin{center}
\begin{tabular}{ll}
{\bf Source}  &{\bf Sample Complexity} \\
\hline \\
\cite{gabillon2012best}							&$O\left(H\ln\delta^{-1} + H\ln H\right)$ \\
\cite{kalyanakrishnan2012pac}					&$O\left(H\ln\delta^{-1} + H\ln H\right)$ \\
\eat{Chen et al.~}\cite{chen2014combinatorial}	&$O\left(H\ln\delta^{-1} + H\ln H\right)$ \\
\eat{Chen et al.~}\cite{chen2016pure}			&$O\left(H\ln\delta^{-1} + \htil + H\ln k\right)$ \\
This paper \eat{(Theorem \ref{theo:ub})}		&$O\left(H\ln\delta^{-1} + \htil + \htilbig + \htilsma\right)$ \\
\end{tabular}
\end{center}
\end{table}

\subsection{Related Work}
\textbf{\bestarm{}.} In the \bestarm{} problem, the algorithm is required to identify the arm with the largest mean. As a special case of \bestkarms{}, the problem has a history dating back to 1954~\cite{bechhofer1954single}. The problem continues to attract significant attention over the past decade~\cite{audibert2010best,even2006action,mannor2004sample,jamieson2014lil,karnin2013almost,chen2015optimal, carpentier2016tight, garivier2016optimal, chen2016towards}.

\textbf{Combinatorial pure exploration.} The combinatorial pure exploration problem, which further generalizes the cardinality constraint in \bestkarms{} (i.e., to choose exactly $k$ arms) to combinatorial constraints (e.g., matroid constraints), was also studied~\cite{chen2014combinatorial,chen2016pure,gabillon2016improved}.

\textbf{PAC learning.} In the PAC learning setting, the algorithm is required to find an approximate solution to the pure exploration problem. The sample complexity of \bestarm{} and \bestkarms{} in PAC setting has been extensively studied. A tight (worst case) bound of $\Theta(n\eps^{-2}\ln\delta^{-1})$ was obtained for the PAC version of the \bestarm{} problem in~\cite{even2002pac, even2006action, mannor2004sample}. The worst case sample complexity of \bestkarms{} in the PAC setting has also been well-studied~\cite{kalyanakrishnan2010efficient,kalyanakrishnan2012pac,zhou2014optimal,cao2015top}.

\section{PRELIMINARIES}
\textbf{Kullback-Leibler divergence.} Let $\KL(P,Q)$ denote the Kullback-Leibler divergence from distribution $Q$ to $P$. The following well-known fact (e.g., a special case of \cite{duchi2007derivations}) states the Kullback-Leibler divergence between two normal distributions with unit variance.

\begin{Fact}\label{fact:KL}
Let $\normal(\mu,\sigma^2)$ denote the normal distribution with mean $\mu$ and variance $\sigma^2$. It holds that
$$\KL(\normal(\mu_1,1),\normal(\mu_2,1)) = \frac{(\mu_1-\mu_2)^2}{2}\text{.}$$
\end{Fact}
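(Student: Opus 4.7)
The plan is to verify the identity by a direct computation from the definition of Kullback--Leibler divergence. Writing the densities $p(x) = (2\pi)^{-1/2}\exp(-(x-\mu_1)^2/2)$ and $q(x) = (2\pi)^{-1/2}\exp(-(x-\mu_2)^2/2)$, the normalizing constants cancel inside $\ln(p(x)/q(x))$, leaving only the quadratic exponents. So the first step is to reduce the KL integral to
$$\KL(\normal(\mu_1,1),\normal(\mu_2,1)) = \Ex_{X\sim\normal(\mu_1,1)}\left[\frac{(X-\mu_2)^2 - (X-\mu_1)^2}{2}\right].$$
The next step is to expand the numerator algebraically: $(X-\mu_2)^2 - (X-\mu_1)^2 = 2X(\mu_1-\mu_2) + (\mu_2^2 - \mu_1^2)$. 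Taking expectation under $\Ex_{X\sim\normal(\mu_1,1)}[X] = \mu_1$ yields $2\mu_1(\mu_1-\mu_2) + (\mu_2^2 - \mu_1^2) = (\mu_1-\mu_2)^2$, and dividing by $2$ gives the claim.

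There is essentially no technical obstacle: the entire argument uses only the first moment of a Gaussian, which is why the statement is labelled a Fact. The one conceptual point worth flagging is that the $X^2$ contributions in $(X-\mu_1)^2$ and $(X-\mu_2)^2$ cancel \emph{before} any expectation is taken, so the variance of the reference distribution never enters the final answer. This cancellation is exactly what makes the Gaussian KL depend only on the mean gap $(\mu_1-\mu_2)$, and is the reason Fact~\ref{fact:KL} will later allow us to translate KL-based change-of-measure lower bounds directly into squared-gap terms such as the $\Gap{i}^{-2}$ contributions appearing in Theorem~\ref{theo:lb}.
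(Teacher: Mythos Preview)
Your computation is correct: the log-ratio reduces to the difference of quadratics, the $X^2$ terms cancel, and taking $\Ex[X]=\mu_1$ immediately gives $(\mu_1-\mu_2)^2/2$. The paper does not supply its own proof of this fact; it simply records it as well known and cites it as a special case of \cite{duchi2007derivations}. Your direct derivation is exactly the standard one and is entirely appropriate here.
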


\textbf{Binary relative entropy.} Let $$d(x,y) = x\ln(x/y) + (1-x)\ln[(1-x)/(1-y)]$$ be the binary relative entropy function. The monotonicity of $d(\cdot,\cdot)$ is useful to our following analysis.

\begin{Fact}\label{fact:d}
For $0\le y\le y_0\le x_0\le x\le 1$, $d(x,y)\ge d(x_0,y_0)$.
\end{Fact}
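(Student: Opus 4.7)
The plan is to reduce the joint monotonicity statement to two separate one-variable monotonicity statements: $d$ is nondecreasing in its first argument on $\{x \ge y\}$ and nonincreasing in its second argument on the same region. Once both are established, the chain
\[
d(x,y) \ge d(x_0, y) \ge d(x_0, y_0)
\]
gives the claim, since under the hypothesis $0 \le y \le y_0 \le x_0 \le x \le 1$ each intermediate pair still lies in the region $\{\text{first argument} \ge \text{second argument}\}$, so the one-variable monotonicity applies along the whole path.

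To verify the two monotonicities, I would compute the partial derivatives from the explicit formula $d(x,y) = x\ln(x/y) + (1-x)\ln((1-x)/(1-y))$. A direct calculation gives
\[
\frac{\partial d}{\partial x} = \ln\!\frac{x(1-y)}{y(1-x)}, \qquad \frac{\partial d}{\partial y} = -\frac{x}{y} + \frac{1-x}{1-y} = \frac{y - x}{y(1-y)}.
\]
The assumption $x \ge y$ makes $x(1-y) \ge y(1-x)$, so $\partial d/\partial x \ge 0$, and it makes $y - x \le 0$, so $\partial d/\partial y \le 0$. Thus on the closed region $\{0 \le y \le x \le 1\}$, $d$ is nondecreasing in $x$ and nonincreasing in $y$.

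To conclude, I would apply the first monotonicity along the segment from $(x_0, y)$ to $(x, y)$: every point on this segment has first coordinate in $[x_0, x]$ and second coordinate $y \le y_0 \le x_0$, so the first-coordinate $\ge$ second-coordinate condition holds, and $d(x_0, y) \le d(x, y)$. Then I would apply the second monotonicity along the segment from $(x_0, y_0)$ to $(x_0, y)$: every point has first coordinate $x_0 \ge y_0$, which is at least the second coordinate (in $[y, y_0]$), so the condition again holds, and $d(x_0, y_0) \le d(x_0, y)$. Combining these two inequalities yields $d(x_0, y_0) \le d(x, y)$.

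The only mildly delicate point, rather than a real obstacle, is handling the boundary values $y = 0$ or $x = 1$, where the formula has logarithmic singularities. These are dealt with by the usual conventions $0 \ln 0 = 0$ and by taking limits along the two segments, which pose no issue since $d$ is continuous (with values in $[0, \infty]$) on $[0,1]^2$. Beyond that, the argument is a routine calculus exercise once the partial derivatives are written down.
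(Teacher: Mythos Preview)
Your proof is correct and follows essentially the same approach as the paper's: both compute the partial derivatives $\partial d/\partial x = \ln\frac{x(1-y)}{y(1-x)}$ and $\partial d/\partial y = \frac{y-x}{y(1-y)}$, observe their signs when $x\ge y$, and conclude the monotonicity in each variable. Your write-up is slightly more explicit about the two-segment chaining and the boundary conventions, but the argument is the same.
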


\textbf{Probability and expectation.} $\Pr_{\alg,\inst}$ and $\Ex_{\alg,\inst}$ denote the probability and expectation when algorithm $\alg$ runs on instance $\inst$. These notations are useful since we frequently consider the execution of different algorithms on various instances in our proof of the lower bound.

\textbf{Change of Distribution.}
The following ``Change of Distribution'' lemma, developed in 
\cite{kaufmann2015complexity}, is a useful tool to quantify the behavior of an algorithm when the instance is modified.

\begin{Lemma}[Change of Distribution]\label{lem:CoD}
	Suppose algorithm $\alg$ runs on $n$ arms. $\inst = (\arm_1,\arm_2,\ldots,\arm_n)$ and $\instp = (\armp_1,\armp_2,\ldots,\armp_n)$ are two sequences of arms. $\tau_i$ denotes the number of samples taken on $\arm_i$. For any event $\event$ in $\mathcal{F}_\sigma$, where $\sigma$ is an almost-surely finite stopping time with respect to the filtration $\{\mathcal{F}_t\}_{t\ge0}$, it holds that
	$$\sum_{i=1}^{n}\Ex_{\alg,\inst}[\tau_i]\KL(\arm_i,\armp_i)\ge d\left(\Pr_{\alg,\inst}[\event],\Pr_{\alg,\instp}[\event]\right)\text{.}$$
\end{Lemma}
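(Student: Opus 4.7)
The plan is to follow the standard change-of-measure argument originating from Kaufmann, Cappé and Garivier. Let $f_{\arm_i}$ and $f_{\armp_i}$ be the densities of the reward distributions. First I would introduce the log-likelihood ratio process
$$L_t = \sum_{i=1}^n \sum_{s=1}^{N_i(t)} \log\frac{f_{\arm_i}(X_{i,s})}{f_{\armp_i}(X_{i,s})},$$
where $N_i(t)$ counts the samples drawn from arm $\arm_i$ by step $t$ and $X_{i,s}$ is the $s$-th such sample. The point of $L_t$ is that it is exactly the $\mathcal{F}_t$-measurable Radon--Nikodym derivative between the laws of the observations under $\inst$ and $\instp$, provided $\alg$'s sampling rule is the same on both instances (which it is, since $\alg$ is fixed).

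Second, I would compute $\Ex_{\alg,\inst}[L_\sigma]$. At the $s$-th sample from arm $\arm_i$, conditional on the filtration just before that sample, the expected increment of $L$ equals $\KL(\arm_i,\armp_i)$. A Wald-type identity, which follows from the almost-sure finiteness of $\sigma$ together with the standard monotone/optional-sampling argument used in~\cite{kaufmann2015complexity}, then gives
$$\Ex_{\alg,\inst}[L_\sigma] = \sum_{i=1}^n \Ex_{\alg,\inst}[\tau_i]\,\KL(\arm_i,\armp_i).$$
Third, by the definition of KL divergence between the two measures restricted to $\mathcal{F}_\sigma$, we have $\Ex_{\alg,\inst}[L_\sigma] = \KL\!\left(\Pr_{\alg,\inst}|_{\mathcal{F}_\sigma},\Pr_{\alg,\instp}|_{\mathcal{F}_\sigma}\right)$.

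Fourth, since $\event \in \mathcal{F}_\sigma$, the indicator $\mathbf{1}_\event$ is an $\mathcal{F}_\sigma$-measurable map into $\{0,1\}$. By the data-processing inequality for KL divergence, pushing forward through this map can only decrease the divergence, so
$$\KL\!\left(\Pr_{\alg,\inst}|_{\mathcal{F}_\sigma},\Pr_{\alg,\instp}|_{\mathcal{F}_\sigma}\right) \ge d\!\left(\Pr_{\alg,\inst}[\event],\Pr_{\alg,\instp}[\event]\right),$$
where the right-hand side is the KL divergence between the two Bernoulli distributions with the corresponding success probabilities. Chaining the three (in)equalities proves the lemma.

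The main obstacle I expect is the rigorous justification of the Wald-type identity at the random time $\sigma$: one needs to interchange expectation and the sum over samples even though $\sigma$ may be unbounded, and to handle the fact that $\alg$'s choice of which arm to pull at each step depends on the history. This is handled by decomposing $L_\sigma$ into per-arm contributions, writing each contribution as a stopped sum of a mean-$\KL$ martingale difference sequence (the increments being bounded in $L^1$ because the per-sample KL is finite), and invoking optional sampling together with almost-sure finiteness of $\sigma$; the remaining steps are then straightforward consequences of standard measure-theoretic identities.
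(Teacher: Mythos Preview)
The paper does not prove this lemma at all: it is quoted verbatim as a known tool from~\cite{kaufmann2015complexity} and used as a black box. Your proof sketch is correct and is precisely the argument given in that reference---log-likelihood ratio process, Wald-type identity for $\Ex_{\alg,\inst}[L_\sigma]$, and the data-processing inequality reducing to the binary relative entropy---so there is nothing to compare against and nothing to add.
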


\section{LOWER BOUND}

\eat{
\lijie{Better specify what an instance mean now...It seems a set of arms and it is somewhat confusing...}
\mingda{Fixed.}
}

Throughout our proof of the lower bound, we assume that the reward distributions of all arms are Gaussian distributions with unit variance. 
Moreover, we assume that the number of arms is sufficiently large. This assumption is used only once in the proof of Lemma \ref{lem:genlb}. Note that when there is only a constant number of arms, our lower bound $\Omega(\hbig + \hsma)$ is implied by the $\Omega(H\ln\delta^{-1})$ term. 

\subsection{Instance Embedding}
The following simple lemma is useful in lower bounding the expected number 
of samples taken from an arm in the top-$k$ set, by restricting to a \bestarm{}
instance embedded in the original \bestkarms{} instance.
We postpone its proof to Appendix C.

\begin{Lemma}[Instance Embedding]\label{lem:InstEmb}
Let $\inst$ be a \bestkarms{} instance. Let $\arm$ be an arm among the top $k$ arms, and $\instemb$ be a \bestarm{} instance consisting of $\arm$ and a subset of arms in $\inst$ outside the top $k$ arms. If some algorithm $\alg$ solves $\inst$ with probability $1-\delta$ while taking less than $N$ samples on $\arm$ in expectation, there exists another algorithm $\algemb$ that solves $\instemb$ with probability $1-\delta$ while taking less than $N$ samples on $\arm$ in expectation.
\end{Lemma}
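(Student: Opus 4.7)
The plan is to prove the Instance Embedding Lemma by simulation: given the algorithm $\alg$ for the \bestkarms{} instance $\inst$, I will build $\algemb$ for $\instemb$ that runs $\alg$ internally, using real samples from the arms that belong to $\instemb$ and internally generated (simulated) samples from the remaining arms of $\inst$, whose Gaussian distributions are known to $\algemb$ by construction.

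More concretely, on input $\instemb$ (a random permutation of $\arm$ together with some subset $S \subseteq \inst \setminus \{\text{top-}k\}$), $\algemb$ first draws a uniformly random injection from the arms of $\instemb$ into the $n$ positions of a simulated copy of $\inst$, fills the remaining positions with the ``known'' arms of $\inst \setminus (\{\arm\}\cup S)$, and then runs $\alg$ on this simulated instance. Since $\instemb$ is itself a random permutation and the embedding into $\inst$ is uniformly random, the resulting configuration has exactly the distribution of a random permutation of $\inst$. Whenever $\alg$ queries a position occupied by an arm of $\instemb$, $\algemb$ requests a real sample from the corresponding arm in $\instemb$; whenever $\alg$ queries a position occupied by a simulated arm, $\algemb$ produces an i.i.d.\ Gaussian sample from the corresponding known distribution using its internal randomness. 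When $\alg$ halts and returns a top-$k$ set, $\algemb$ returns $\arm$ if the position of $\arm$ lies in that set, and returns an arbitrary arm otherwise.

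For correctness, observe that $\arm$ is the unique best arm of $\instemb$: every other arm in $\instemb$ comes from $\inst \setminus \{\text{top-}k\}$, so its mean is at most $\Mean{k+1} < \Mean{k} \le \mu_\arm$. If $\alg$ succeeds on the simulated $\inst$, then its returned top-$k$ contains $\arm$, and hence $\algemb$ outputs $\arm$. Since $\alg$ succeeds with probability at least $1-\delta$ over the simulated randomness, $\algemb$ solves $\instemb$ with probability at least $1-\delta$. For the sample count, the only source of real samples on $\arm$ inside $\algemb$ is the simulation of $\alg$; so the expected number of real samples $\algemb$ draws from $\arm$ equals the expected number of samples $\alg$ takes on $\arm$ when run on $\inst$, which is less than $N$ by hypothesis.

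The main (minor) subtlety is handling Definition~\ref{def:instance} cleanly: one must verify that the composition of the random permutation coming with $\instemb$ and the uniformly random embedding into the $n$ positions of $\inst$ does yield the uniform permutation distribution required for $\alg$'s guarantees to apply, so that the $1-\delta$ correctness and the ``$<N$ samples on $\arm$'' hypothesis transfer to the simulation. Everything else is bookkeeping: the simulated samples are drawn from the correct distributions, and the mapping between the output of $\alg$ and $\algemb$'s output preserves correctness because $\arm$ being returned in the top-$k$ of $\inst$ is equivalent to $\arm$ being declared the best arm of $\instemb$.
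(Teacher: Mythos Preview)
Your proof is correct and follows the same simulation approach as the paper's: augment $\instemb$ with the remaining (known-distribution) arms of $\inst$, simulate $\alg$ on the augmented instance, and output the arm of $\instemb$ lying in $\alg$'s returned top-$k$ set. You are in fact more explicit than the paper about the random-permutation issue from Definition~\ref{def:instance}; the only cosmetic fixes are to also place the ``known'' arms uniformly at random among the unused positions (so the simulated input is truly a uniform permutation of $\inst$) and to phrase the output rule as returning any arm of $\instemb$ whose embedded position lies in $\alg$'s output, since $\algemb$ cannot tell which of its input arms is $\arm$.
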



\subsection{Proof of Theorem \ref{theo:lb}}
We show a lower bound on the number of samples required by each arm separately, 
and then the lower bound stated in Theorem \ref{theo:lb} follows from a direct summation. Formally, we have the following lemma.

\begin{Lemma}\label{lem:karmslb}
Let $\inst$ be an instance of \bestkarms{}. There exist universal constants $\delta$ and $c$ such that for all $1\le j\le i$, any \CORRECT{} algorithm for \bestkarms{} takes at least $c\eps_j^{-2}\ln\left|\gsma_{\ge j}\right|$ samples on every arm $\arm\in\gbig_i$. The same holds if we swap $\gbig$ and $\gsma$. 
\end{Lemma}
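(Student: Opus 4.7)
The plan is to reduce the \bestkarms{} sample-complexity claim to an analogous bound for a \bestarm{} instance via Lemma~\ref{lem:InstEmb}, and then establish the \bestarm{} bound through a multi-hypothesis Change of Distribution argument that extracts the $\ln|\gsma_{\ge j}|$ factor.

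First I would fix $\arm \in \gbig_i$ and $j \le i$, and define the embedded \bestarm{} instance $\instemb$ to consist of $\arm$ together with every arm in $\gsma_{\ge j}$. The mean constraints give $\mu_{\arm} \ge \Mean{k}$ while every arm $B \in \gsma_{\ge j}$ satisfies $\mu_B \le \Mean{k+1} < \Mean{k}$, which makes $\arm$ the unique best arm of $\instemb$. Since $\Delta_{\arm} \le \eps_i \le \eps_j$ and $\Delta_B \le \eps_j$, a short calculation from the definitions of $\gbig_i$ and $\gsma_{\ge j}$ yields $\mu_{\arm} - \mu_B \le \eps_i + \eps_j \le 2\eps_j$ for every such $B$. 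Applying the contrapositive of Lemma~\ref{lem:InstEmb} reduces the goal to: any $\delta$-correct algorithm for \bestarm{}, run on $\instemb$, takes at least $c\eps_j^{-2}\ln N$ samples on $\arm$ in expectation, where $N := |\gsma_{\ge j}|$.

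Next I would index $\gsma_{\ge j} = \{B_1, \ldots, B_N\}$ and, for each $t$, construct an alternative \bestarm{} instance $\inst^{(t)}$ by swapping the means of $\arm$ and $B_t$, so that $B_t$ becomes the unique best arm in $\inst^{(t)}$. Applying Lemma~\ref{lem:CoD} between $\instemb$ and $\inst^{(t)}$ with the event $\event = \{\text{algorithm outputs } \arm\}$ (of probability $\ge 1-\delta$ under $\instemb$ and $\le \delta$ under $\inst^{(t)}$, since $B_t$ is the correct output there), combined with the KL bound $(\mu_{\arm} - \mu_{B_t})^2/2 \le 2\eps_j^2$ from Fact~\ref{fact:KL}, yields for every $t$
\[
2\eps_j^2\bigl(\Ex_{\instemb}[\tau_{\arm}] + \Ex_{\instemb}[\tau_{B_t}]\bigr) \ge d(1-\delta, \delta).
\]

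The hard part will be extracting the $\ln N$ factor: each of the $N$ inequalities above, on its own, only gives $\Ex_{\instemb}[\tau_{\arm}] = \Omega(\eps_j^{-2})$. My plan here is to view $\instemb, \inst^{(1)}, \ldots, \inst^{(N)}$ as $N+1$ distinct hypotheses (each with a different best arm) and invoke a Fano-style multi-hypothesis bound: since the algorithm identifies the correct best arm under every hypothesis with probability $\ge 1-\delta$, its transcript must carry $\Omega(\ln N)$ nats of information about the true hypothesis, which, combined with the per-pair KL estimates, will force $\Ex_{\instemb}[\tau_{\arm}] \ge c\eps_j^{-2}\ln N$. The subtle point is that a Fano bound naturally controls the sum $\Ex_{\instemb}[\tau_{\arm}] + \mathrm{avg}_t\,\Ex_{\instemb}[\tau_{B_t}]$ rather than $\Ex_{\instemb}[\tau_{\arm}]$ alone; the random-permutation definition of an instance (Definition~\ref{def:instance}) is what I would use to resolve this, because the symmetry it imposes across permutations rules out algorithms that try to conceal their ignorance about $\arm$ by over-sampling the $B_t$'s. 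Finally, the symmetric statement (swapping $\gbig$ and $\gsma$) follows by the analogous argument applied to an arm $\arm \in \gsma_i$ together with the arms in $\gbig_{\ge j}$.
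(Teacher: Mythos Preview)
Your reduction via Lemma~\ref{lem:InstEmb} to a \bestarm{} instance on $\{\arm\}\cup\gsma_{\ge j}$ is exactly what the paper does, and your gap calculation $\mu_{\arm}-\mu_B\le\eps_i+\eps_j\le 2\eps_j$ is the right one. The divergence from the paper, and the real gap, is in how you extract the $\ln N$ factor for $\tau_{\arm}$ alone.

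Your pairwise swap $\inst^{(t)}$ is sound (the lifted Best-$k$-Arm instance is valid and the embedded algorithm is indeed correct on it), so each Change of Distribution inequality you wrote is correct. But each one, and any Fano-type aggregation of them, controls a quantity of the form $\Ex[\tau_{\arm}]+\Ex[\tau_{B_t}]$ (or an average thereof), because two arms change distribution under every swap. You acknowledge this, and propose that the random-permutation semantics of Definition~\ref{def:instance} will let you strip off the $\tau_{B_t}$ contribution. That is where the argument breaks: random permutation gives you symmetry only among arms that are genuinely exchangeable, i.e.\ have identical distributions. The arms $B_1,\ldots,B_N$ in $\gsma_{\ge j}$ have distinct means, so permuting the input does not force the algorithm to treat them identically, and nothing prevents an algorithm from spending $\Omega(\eps_j^{-2})$ samples on each $B_t$ while sampling $\arm$ only $O(\eps_j^{-2})$ times; your Fano inequality would then be satisfied without the desired bound on $\tau_{\arm}$.

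The paper's route avoids this entirely. It packages the \bestarm{} step as a separate lemma (Lemma~\ref{lem:genlb}) and proves it by first reducing to a \emph{symmetric} \bestarm{} instance: pigeonhole the $N$ sub-optimal means into $N^{0.9}$ sub-intervals of $[\mu_{\arm}-2\eps_j,\mu_{\arm})$, pick one containing $N^{0.1}$ arms, and use Instance Embedding again to pass to that dense sub-instance (losing only a constant in $\ln N$). A further technical step replaces those nearly-equal means by exactly equal ones. On the resulting symmetric instance (Lemma~\ref{lem:symlb}), one lowers \emph{only} $\arm$'s mean to match the others; in that alternative all $n+1$ arms are identical, so by the random-permutation definition the algorithm outputs $\arm$ with probability at most $1/(n+1)$, and a single Change of Distribution on $\arm$ alone yields $\Ex[\tau_{\arm}]\ge\Omega(\Delta^{-2}\ln n)$. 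The $\ln N$ factor thus comes from the $1/(n+1)$ symmetry probability in one alternative, not from $N$ pairwise comparisons, and only $\tau_{\arm}$ ever enters the KL term.
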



Before proving Lemma~\ref{lem:karmslb}, we show that
Theorem \ref{theo:lb} follows from Lemma \ref{lem:karmslb} directly.

\begin{proof}[Proof of Theorem \ref{theo:lb}]
Since the $\Omega(H\ln\delta^{-1})$ lower bound has been established in Theorem 2 of \cite{chen2014combinatorial}, it remains to show that the sample complexity is lower bounded by both $\Omega(\hbig)$ and $\Omega(\hsma)$. Let $\alg$ be a \CORRECT{} algorithm for \bestkarms{}. According to Lemma \ref{lem:karmslb}, $\alg$ draws at least $c\cdot\max_{j\le i}\eps_j^{-2}\ln\left|\gsma_{\ge j}\right|$ samples from each arm in $\gbig_{i}$. Therefore $\alg$ draws at least
\begin{equation*}
\sum_{i=1}^{\infty}\left|\gbig_i\right|\cdot c\cdot\max_{j\le i}\eps_j^{-2}\ln\left|\gsma_{\ge j}\right| = \Omega(\hbig)
\end{equation*}
samples in total from the arms in $\gbig$. The $\Omega(\hsma)$ lower bound is analogous.
\end{proof}

\subsection{Reduction to \bestarm{}}

In order to prove Lemma \ref{lem:karmslb}, we construct a \bestarm{} instance consisting of one arm in $\gbig_i$ and all arms in $\gsma_{\ge j}$. By Instance Embedding (Lemma \ref{lem:InstEmb}), to lower bound the number of samples taken on each arm in $\gbig_i$, it suffices to prove that every algorithm for \bestarm{} takes sufficiently many
samples on the best arm. Formally, we would like to show 
the following key technical lemma.

\begin{Lemma}\label{lem:genlb}
Let $\inst$ be an instance of \bestarm{} consisting of one arm with mean $\mu$ and $n$ arms with means on $[\mu - \Delta,\mu)$. There exist universal constants $\delta$ and $c$ (independent of $n$ and $\Delta$) such that for any algorithm $\alg$ that correctly solves $\inst$ with probability $1-\delta$, the expected number of samples drawn from the optimal arm is at least $c\Delta^{-2}\ln n$.
\end{Lemma}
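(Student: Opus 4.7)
The plan is to apply Lemma~\ref{lem:CoD} to a single carefully chosen alternative instance $\instp$, exploiting the random permutation convention (Definition~\ref{def:instance}) to extract the $\ln n$ factor from one use of CoD. As a WLOG step (by a monotonicity argument), I would first reduce to the worst case where all $n$ suboptimal arms of $\inst$ have mean exactly $\mu - \Delta$; denote the optimal arm by $a_0$ (mean $\mu$) and the suboptimal arms by $a_1, a_2, \ldots, a_n$ (all mean $\mu - \Delta$).

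I would obtain $\instp$ from $\inst$ by (i) lowering $a_0$'s mean to $\mu - \Delta$ and (ii) raising $a_1$'s mean to $\mu - \Delta + \epsilon$ for a vanishing $\epsilon > 0$. In $\instp$, arm $a_1$ is the unique best arm, while the $n$ arms $\{a_0, a_2, \ldots, a_n\}$ are statistically identical, each distributed as $\normal(\mu - \Delta, 1)$. The crux is a symmetry argument: by Definition~\ref{def:instance}, the sample complexity on $\instp$ is averaged over a uniformly random permutation of the arms, and for any two of these $n$ tied arms the transposition bijection on permutations that swaps their positions leaves the observation distribution unchanged. Consequently $\alg$'s probability of outputting each of these $n$ tied non-best arms is the same, and by $(1-\delta)$-correctness on $\instp$ this common value is at most $\delta/n$; in particular $\Pr_{\alg,\instp}[\text{output }a_0] \le \delta/n$.

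Now I would apply Lemma~\ref{lem:CoD} (for each fixed permutation, then averaging via the joint convexity of $d$) to the event $\event = \{\alg \text{ outputs } a_0\}$. The only arms whose distributions differ between $\inst$ and $\instp$ are $a_0$ (KL equal to $\Delta^2/2$ by Fact~\ref{fact:KL}) and $a_1$ (KL equal to $\epsilon^2/2$), yielding
\begin{equation*}
\Ex_{\alg,\inst}[\tau_{a_0}] \cdot \tfrac{\Delta^2}{2} + \Ex_{\alg,\inst}[\tau_{a_1}] \cdot \tfrac{\epsilon^2}{2} \;\ge\; d\!\left(1-\delta,\; \delta/n\right).
\end{equation*}
Letting $\epsilon \to 0$ eliminates the second term, and a direct computation gives $d(1-\delta, \delta/n) = (1-\delta)\ln n + O(\ln\delta^{-1})$, which is $\Omega(\ln n)$ for a sufficiently small fixed $\delta$ once $n$ is above an absolute constant. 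This yields $\Ex_{\alg,\inst}[\tau_{a_0}] = \Omega(\Delta^{-2}\ln n)$, as claimed.

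The main obstacle is rigorously justifying the symmetry step: one must verify that averaging over random permutations genuinely forces the output probabilities of the $n$ statistically identical arms to coincide, and that the CoD inequality can be lifted through this averaging by joint convexity of $d$. This is also the single place where the argument requires $n$ to exceed an absolute constant (so that $\ln n$ dominates the $O(\ln\delta^{-1})$ correction), matching the paper's preceding remark that ``the number of arms is sufficiently large'' is used only once, precisely here.
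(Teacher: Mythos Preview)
Your ``WLOG step (by a monotonicity argument)'' is the genuine gap. The hypothesis of Lemma~\ref{lem:genlb} is only that $\alg$ solves the \emph{specific} instance $\inst$ with probability $1-\delta$; it is not assumed to be $\delta$-correct on any other instance. Hence you cannot replace the suboptimal means $\mu_1,\ldots,\mu_n\in[\mu-\Delta,\mu)$ by $\mu-\Delta$ and still claim $\alg$ succeeds on the modified instance, and there is no evident monotonicity that transfers a lower bound proved for the symmetric instance back to the original $\inst$. If instead you try to fold this change into the Change-of-Distribution step, the left-hand side picks up an extra term $\sum_{i\ge 2}\Ex_{\alg,\inst}[\tau_{a_i}]\cdot(\mu_i-(\mu-\Delta))^2/2$, which in general is of order $\Delta^2$ times the total sample complexity and completely swamps the $\Ex[\tau_{a_0}]$ term you are trying to isolate. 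What survives of your argument after the WLOG is essentially a proof of the symmetric special case --- and that is precisely Lemma~\ref{lem:symlb} in the paper, proved there by the same one-shot CoD (lower $a_0$ to $\mu-\Delta$, making all $n+1$ arms identical, and use the random permutation to get probability $\le 1/(n+1)$).

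The entire content of the paper's proof of Lemma~\ref{lem:genlb} is the reduction you hand-waved away. It first pigeonholes to a sub-instance $\instden$ containing $m=n^{0.1}$ suboptimal arms whose means all lie in an interval of length $\Delta/n^{0.9}$, so that the KL cost of symmetrizing \emph{those} arms becomes negligible; it then uses a rather delicate ``mixed-arm'' coupling (Steps~2--5) to pass from $\instden$ to a truly symmetric instance $\instsym$ without inflating the number of pulls on the optimal arm, at which point Lemma~\ref{lem:symlb} applies. A secondary issue: your line ``by $(1-\delta)$-correctness on $\instp$'' is also unjustified, since correctness is assumed only on $\inst$; but that part is easily repaired, because the permutation symmetry alone already gives $\Pr_{\alg,\instp}[\text{output }a_0]\le 1/n$, which suffices for the $d(\cdot,\cdot)$ bound.
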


The proof of Lemma~\ref{lem:genlb} is somewhat technical and we 
present it in the next subsection.
Now we prove Lemma \ref{lem:karmslb} from Lemma~\ref{lem:genlb},
by reducing a \bestarm{} instance to an instance of \bestkarms{} using the Instance Embedding technique. Intuitively, if an algorithm $\alg$ solves \bestkarms{} without taking sufficient number of samples from a specific arm, we may extract an instance of \bestarm{} and derive a contradiction to Lemma \ref{lem:genlb}.

\begin{proof}[Proof of Lemma \ref{lem:karmslb}]
Let $\delta_0$ and $c_0$ be the constants in Lemma \ref{lem:genlb}. We claim that Lemma \ref{lem:karmslb} holds for constants $\delta = \delta_0$ and $c = c_0/4$.

Suppose for a contradiction that when \CORRECT{} algorithm $\alg$ runs on \bestkarms{} instance $\inst$, the number of samples drawn from arm $\arm\in\gbig_i$ is less than $c\eps_j^{-2}\ln\left|\gsma_{\ge j}\right|$ for some $j\le i$.

We construct a \bestarm{} instance $\instnew$ consisting of $\arm$ and all arms in $\gsma_{\ge j}$. By Instance Embedding (Lemma \ref{lem:InstEmb}), there exists algorithm $\algnew$ that solves $\instnew$ with probability $1-\delta$, while the number of samples drawn from arm $\arm$ is upper bounded by $c\eps_j^{-2}\ln\left|\gsma_{\ge j}\right|$ in expectation.

However, Lemma \ref{lem:genlb} implies that $\algnew$ must take more than
$$c_0\Delta^{-2}\ln n \ge 4c (\eps_i + \eps_j)^{-2} \ln\left|\gsma_{\ge j}\right| \ge c\eps_j^{-2}\ln\left|\gsma_{\ge j}\right|$$
samples on the optimal arm, which leads to a contradiction. The case that $\gbig$ and $\gsma$ are swapped is analogous.
\end{proof}

\subsection{Reduction to Symmetric \bestarm{}}
In order to prove Lemma \ref{lem:genlb}, we first study a special case that the instance consists of one optimal arm and several sub-optimal arms with equal means (we call it a Symmetric \bestarm{} instance). For the symmetric \bestarm{} instances, we have the following lower bound on the best arm.

\begin{Lemma}\label{lem:symlb}
Let $\inst$ be an instance of \bestarm{} with one arm with mean $\mu$ and $n$ arms with mean $\mu - \Delta$. There exist universal constants $\delta$ and $c$ (independent of $n$ and $\Delta$) such that for any algorithm $\alg$ that correctly solves $\inst$ with probability $1-\delta$, the expected number of samples drawn from the optimal arm is at least $c\Delta^{-2}\ln n$.
\end{Lemma}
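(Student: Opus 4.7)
The plan is to apply the Change of Distribution lemma (Lemma \ref{lem:CoD}) against a parameterized family of alternative instances, using the symmetry of $\inst$ together with a vanishing perturbation trick to isolate the expected number of samples on the optimal arm.

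For each sub-optimal arm $i\in\{2,\ldots,n+1\}$ and each $\epsilon>0$, define $\inst_i^\epsilon$ by lowering arm $1$'s mean from $\mu$ to $\mu-\Delta$ and raising arm $i$'s mean from $\mu-\Delta$ to $\mu-\Delta+\epsilon$, leaving every other arm unchanged. Then $\inst_i^\epsilon$ is a valid Best-Arm instance whose unique optimum is arm $i$, and arm $1$ together with the arms in $\{2,\ldots,n+1\}\setminus\{i\}$ form $n$ identically distributed $\normal(\mu-\Delta,1)$ sub-optimal arms. By the $\delta$-correctness of $\alg$ on $\inst_i^\epsilon$, the total probability of outputting any one of these $n$ identical sub-optimal arms is at most $\delta$. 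Combined with the exchangeability of the $n$ sub-optimal arms under the random-permutation convention in Definition \ref{def:instance}, $\alg$ outputs each of them with probability at most $\delta/n$; in particular $\Pr_{\alg,\inst_i^\epsilon}[\text{output arm }1]\le\delta/n$, while $\Pr_{\alg,\inst}[\text{output arm }1]\ge1-\delta$.

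Applying Lemma \ref{lem:CoD} with event $\event=\{\text{output arm }1\}$, only arms $1$ and $i$ contribute nonzero KL terms, yielding
\begin{equation*}
\tau_1\cdot\frac{\Delta^2}{2}+\tau_i\cdot\frac{\epsilon^2}{2}\;\ge\;d(1-\delta,\,\delta/n),
\end{equation*}
where $\tau_j=\Ex_{\alg,\inst}[\text{samples on arm }j]$ depends only on the fixed instance $\inst$ and is therefore independent of $\epsilon$. Taking $\epsilon\to0^+$ annihilates the second left-hand term (provided $\tau_i<\infty$, a benign assumption for any algorithm with finite expected sample complexity), leaving $\tau_1\cdot\Delta^2/2\ge d(1-\delta,\delta/n)$. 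A direct calculation shows $d(1-\delta,\delta/n)=(1-\delta)\ln(n/\delta)+\Theta_\delta(1)=\Theta(\ln n)$ for a sufficiently small constant $\delta$ and $n$ large, so $\tau_1\ge c\Delta^{-2}\ln n$ for some universal constant $c>0$, as required.

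The main step demanding care is the exchangeability argument giving $\Pr_{\alg,\inst_i^\epsilon}[\text{output arm }1]\le\delta/n$: it produces the $\ln n$ factor in the binary relative entropy term, and is precisely where the random-permutation convention in Definition \ref{def:instance} is essential, since it forces any $\delta$-correct algorithm to spread its wrong-answer probability uniformly across the $n$ identically distributed sub-optimal arms of $\inst_i^\epsilon$. The perturbation $\epsilon>0$ is needed so that $\inst_i^\epsilon$ satisfies the uniqueness requirement; the $\epsilon\to0^+$ limit is then the trick that extracts $\tau_1$ alone, rather than the coupled combination $\tau_1+\tau_i$ one would obtain by swapping the means of arms $1$ and $i$ outright.
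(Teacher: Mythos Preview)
Your argument has a genuine gap at the step where you invoke ``the $\delta$-correctness of $\alg$ on $\inst_i^\epsilon$.'' The lemma's hypothesis is only that $\alg$ correctly solves the \emph{specific} instance $\inst$ with probability $1-\delta$; it does not assume $\alg$ is a $\delta$-correct algorithm for \bestarm{} in general. Nor is global correctness available downstream: in the paper's chain of reductions, Instance Embedding (Lemma~\ref{lem:InstEmb}) produces an algorithm that is guaranteed correct only on the single embedded instance, not on every \bestarm{} instance. Without correctness on $\inst_i^\epsilon$ you have no control over $\Pr_{\alg,\inst_i^\epsilon}[\text{output arm }1]$, and the whole inequality collapses.

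The paper avoids this by changing \emph{only} the optimal arm: lower arm $1$'s mean to $\mu-\Delta$, so that all $n+1$ arms in the alternative instance $\instnew$ are identical. Then only the term $\tau_1\cdot\Delta^2/2$ appears on the KL side, so no $\epsilon$-perturbation or limit is needed; and the bound $\Pr_{\alg,\instnew}[\text{output arm }1]\le 1/(n+1)$ follows purely from the random-permutation symmetry of $n+1$ identical arms, with no appeal to correctness on $\instnew$. In particular, your stated motivation for introducing $\epsilon>0$ --- that the alternative instance must satisfy the uniqueness requirement of Definition~\ref{def:bestkarms} --- is unnecessary: the Change of Distribution lemma (Lemma~\ref{lem:CoD}) applies to any pair of product distributions, and since we never invoke correctness on the alternative, it is irrelevant that $\instnew$ has no unique best arm. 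Dropping the second arm change both simplifies the proof and, crucially, makes it go through under the lemma's actual hypothesis.
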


\begin{proof}[Proof of Lemma \ref{lem:symlb}]
We claim that the lemma holds for constants $\delta = 0.5$ and $c = 1$.

Recall that $\normal(\mu, \sigma^2)$ denotes the normal distribution with mean $\mu$ and variance $\sigma^2$. Let $\inst$ be the instance consisting of arm $\arm^*$ with mean $\mu$ and $n$ arms with mean $\mu-\Delta$, and $\instnew$ be the instance obtained from $\inst$ by replacing the reward distribution of $\arm^*$ with $\normal(\mu - \Delta,1)$. $\tau$ denotes the number of samples drawn from $\arm^*$.

Let $\event$ be the event that $\alg$ identifies arm $\arm^*$ as the best arm. Recall that $\Pr_{\alg,\inst}$ and $\Ex_{\alg,\inst}$ denote the probability and expectation when algorithm $\alg$ runs on instance $\inst$ respectively. Since $\alg$ solves $\inst$ correctly with probability at least $1-\delta$, we have $\Pr_{\alg,\inst}[\event]\ge 1-\delta$. On the other hand, $\instnew$ consists of $n+1$ completely identical arms. By Definition \ref{def:instance}, $\alg$ takes a random permutation of $\instnew$ as its input. Therefore the probability that $\alg$ returns each arm is the same, and it follows that $\Pr_{\alg,\instnew}[\event]\le 1/(n+1)$.

By Change of Distribution (Lemma \ref{lem:CoD}), we have
\begin{equation*}\begin{split}
    \frac{1}{2}\Ex_{\alg,\inst}[\tau]\Delta^2
  =&\Ex_{\alg,\inst}[\tau]\cdot\KL(\normal(\mu,1),\normal(\mu-\Delta,1))\\
\ge& d\left(\Pr_{\alg,\inst}[\event], \Pr_{\alg,\instnew}[\event]\right)\\
\ge& d(1-\delta,1/(n+1))\\
\ge& (1-\delta)\ln n\text{.}
\end{split}\end{equation*}
Therefore we conclude that
$$\Ex_{\alg,\inst}[\tau]\ge2(1-\delta)\Delta^{-2}\ln n\ge c\Delta^{-2}\ln n\text{.}$$

\end{proof}

Given Lemma~\ref{lem:symlb}, Lemma \ref{lem:genlb} may appear 
to be quite intuitive, as the symmetric instance $\instsym$ seems to be the worst case.
However, a rigorous proof of Lemma \ref{lem:genlb} 
is still quite nontrivial and is in fact the most technical part of the lower bound proof.
The proof consists of several steps which transform a general instance $\inst$ of \bestarm{} to a symmetric instance $\instsym$.

Suppose that some algorithm $\alg$ violates Lemma \ref{lem:genlb} on a \bestarm{} instance $\inst$. We divide the interval $[\mu - \Delta,\mu)$ into $n^{0.9}$ short segments, then at least one segment contains $n^{0.1}$ arms. We construct a smaller and denser instance $\instden$ consisting of the optimal arm and $n^{0.1}$ arms from the same segment. By Instance Embedding, there exists algorithm $\algnew$ that solves $\instden$ while taking few samples on the optimal arm. Note that the reduction crucially relies on the fact that since our lower bound is logarithmic in $n$, the bound merely shrinks by a constant factor after the number of arms decreases to $n^{0.1}$. 

Finally, we transform $\instden$ into a symmetric \bestarm{} instance $\instsym$ consisting of the optimal arm in $\instden$ along with $n^{0.1}$ copies of one of the sub-optimal arms. We also define an algorithm $\algsym$ that solves $\instsym$ with few samples drawn from the optimal arm, thus contradicting Lemma \ref{lem:symlb}. The full proof of Lemma \ref{lem:genlb} is postponed to Appendix C.


\section{UPPER BOUND}
\subsection{Building Blocks}
We start by introducing three subroutines that are useful for building our algorithm for \bestkarms{}.

\textbf{PAC algorithm for \bestkarms{}.} $\PAC$ is a PAC algorithm for \bestkarms{} adapted from the \PACSP{} algorithm in \cite{chen2016pure}. $\PAC$ is guaranteed to partition the given arm set into two sets $\sbig$ and $\ssma$, such that $\sbig$ approximates the best $k$ arms with high probability.

\begin{Lemma}\label{fact:PAC}
$\PAC(S, k, \eps, \delta)$ takes $$O\left(|S|\eps^{-2}\left[\ln\delta^{-1}+\ln\min(k,|S|-k)\right]\right)$$ samples and returns a partition $(\sbig,\ssma)$ of $S$ with $|\sbig|=k$ and $|\ssma| = |S| - k$. Let $\Mean{k}$ and $\Mean{k+1}$ denote the the $k$-th and the $(k+1)$-th largest means in $S$. With probability $1 - \delta$, it holds that
\begin{equation}\label{eq:PAC1}
\mu_{\arm}\ge\Mean{k} - \eps\text{ for all }\arm\in \sbig\text{,}
\end{equation}
\begin{equation}\label{eq:PAC2}
\mu_{\arm}\le\Mean{k+1}+\eps\text{ for all }\arm\in \ssma\text{.}
\end{equation}
\end{Lemma}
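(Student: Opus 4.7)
The plan is to cast $\PAC$ as a light specialization of the \PACSP{} algorithm from \cite{chen2016pure} and read off its guarantees with modest bookkeeping. \PACSP{} was designed for the more general combinatorial pure-exploration setting, but its analysis passes smoothly through the cardinality-$k$ case. Two small adaptations are needed.

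First, I would assume without loss of generality that $k \le |S|/2$, so $\min(k,|S|-k) = k$. If instead $k > |S|/2$, run the algorithm with target size $|S|-k$ to identify a set of approximate bottom arms, and output its complement as $\sbig$; this preserves the PAC guarantees but replaces $\ln k$ by $\ln(|S|-k)$. The desired dependence $\ln\min(k,|S|-k)$ then arises by always choosing the cheaper side.

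The main construction runs in $R = O(\ln\eps^{-1})$ phases with accuracy $\eps_r = 2^{-r}$ and failure budget $\delta_r = \delta/(cr^2)$. Maintain an active set $A_r$ (initially $S$) and partial selections $T_r \subseteq \sbig$, $B_r \subseteq \ssma$. In phase $r$, draw a fresh batch of $\Theta(\eps_r^{-2}\ln(|A_r|/\delta_r))$ samples from every arm in $A_r$, set $\tau_r$ to the $(k-|T_r|)$-th largest empirical mean in $A_r$, and move to $T_r$ (respectively $B_r$) every active arm whose fresh empirical mean exceeds $\tau_r$ by more than $\eps_r$ (resp.\ falls below $\tau_r$ by more than $\eps_r$). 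After phase $R$, pad $\sbig$ with any remaining active arms arbitrarily. Correctness follows from a concentration-plus-union-bound argument: on the good event that in every phase, every fresh batch has empirical mean within $\eps_r/4$ of the truth, an induction over $r$ shows that every arm assigned to $T_r$ satisfies $\mu \ge \Mean{k} - \eps$ and every arm assigned to $B_r$ satisfies $\mu \le \Mean{k+1} + \eps$. Surviving active arms at the end have means within $\eps$ of both $\Mean{k}$ and $\Mean{k+1}$, so any placement satisfies (\ref{eq:PAC1}) and (\ref{eq:PAC2}).

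The main obstacle will be the sample complexity analysis, and in particular extracting $\ln k$ rather than $\ln |S|$ in the per-arm cost. Following the template of \cite{chen2016pure}, the central structural claim is that $\sum_r |A_r|\cdot \eps_r^{-2}\ln(|A_r|/\delta_r)$ telescopes to $O(|S|\eps^{-2}[\ln\delta^{-1} + \ln k])$. The intuition is that at phase $r$, every arm whose true mean is more than $2\eps_r$ from the boundary has already been classified, while the classification threshold $\tau_r$ is stable enough that the active set shrinks fast enough once the bulk separates from the threshold. The coarse early phases cost $O(|S|\eps^{-2}\ln\delta^{-1})$ via a geometric sum in $\eps_r^{-2}$, while the fine late phases are dominated by an $O(k)$-sized ambiguous band near the $k$-th mean, contributing $O(k\eps^{-2}\ln(k/\delta))$. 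Making the rate-of-decrease claim rigorous while tracking dependence on $k$ rather than $|S|$ is the technical heart of the argument; it reduces to a careful analysis of the empirical order statistic $\tau_r$ together with the concentration of the empirical means in $A_r$, essentially as in \cite{chen2016pure}.
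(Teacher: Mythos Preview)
Your approach is correct in outline and shares the symmetry trick (handle $k > |S|/2$ by flipping to the bottom-$(|S|-k)$ problem) with the paper, but the paper takes a much more economical route for the remainder. It simply invokes \PACSP{} from \cite{chen2016pure} as a black box, using its published guarantee of returning an \emph{$\eps$-optimal} set $T$ of size $k$ (meaning $T$ becomes the top-$k$ after adding $\eps$ to every mean in $T$) with $O(n\eps^{-2}(\ln\delta^{-1}+\ln k))$ samples. The only new work in the paper's proof is a short argument that $\eps$-optimality of $T$ implies both \eqref{eq:PAC1} and \eqref{eq:PAC2}: if some $\arm\in T$ had $\mu_\arm < \Mean{k}-\eps$ it could not be top-$k$ even after the $\eps$ boost, and if some $\arm\notin T$ had $\mu_\arm > \Mean{k+1}+\eps$ it would still beat one of the boosted arms in $T$, contradicting $\eps$-optimality.

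You instead re-open the box and re-derive a phased-elimination algorithm together with its analysis. This is more self-contained, but it leaves you with exactly the ``technical heart'' you flag --- the $|A_r|$-shrinkage argument that yields $\ln k$ rather than $\ln|S|$ --- which is precisely what the black-box citation sidesteps. Your correctness sketch also needs a touch more care: the claim that surviving arms have means ``within $\eps$ of both $\Mean{k}$ and $\Mean{k+1}$'' is not literally what the algorithm gives you; rather, they are within $O(\eps)$ of the empirical threshold $\tau_R$, which in turn lies in $[\Mean{k+1}-O(\eps),\Mean{k}+O(\eps)]$, and one then checks this suffices for either placement. None of this is wrong, but the paper's $\eps$-optimality translation handles it in two lines.
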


Lemma \ref{fact:PAC} is proved in Appendix D. We say that a specific call to $\PAC$ returns correctly if both \eqref{eq:PAC1} and \eqref{eq:PAC2} hold.




\textbf{PAC algorithms for \bestarm{}.} $\MEbig$ and $\MEsma$ approximate the largest and the smallest mean among several arms respectively. Both algorithms can be easily implemented by calling $\PAC$ with $k = 1$, and then sampling the best arm identified by $\PAC$.

\begin{Lemma}\label{fact:ME}
Both $\MEbig(S, \eps, \delta)$ and $\MEsma(S, \eps, \delta)$ take $O(|S|\eps^{-2}\ln\delta^{-1})$ samples and output a real number. Each of the following inequalities holds with probability $1 - \delta$:
\begin{equation}\label{eq:ME1}
\left|\MEbig(S, \eps, \delta) - \max_{\arm\in S}\mu_{\arm}\right|\le\eps
\end{equation}
\begin{equation}\label{eq:ME2}
\left|\MEsma(S, \eps, \delta) - \min_{\arm\in S}\mu_{\arm}\right|\le\eps
\end{equation}
\end{Lemma}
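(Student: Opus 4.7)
The plan is to realize both subroutines as a two-stage composition: first use $\PAC$ to locate an approximately extremal arm, then estimate that arm's mean directly by fresh sampling. For $\MEbig(S, \eps, \delta)$, I would begin by invoking $\PAC(S, 1, \eps/2, \delta/2)$ to obtain a candidate arm $\arm^*$. By Lemma~\ref{fact:PAC}, this call succeeds with probability at least $1-\delta/2$, in which case property \eqref{eq:PAC1} (applied with $k=1$) guarantees $\mu_{\arm^*} \ge \max_{\arm\in S}\mu_{\arm} - \eps/2$. Crucially, since $\min(k, |S|-k) = 1$ when $k=1$, the $\ln\min(k,|S|-k)$ factor in Lemma~\ref{fact:PAC} contributes nothing, so this stage uses only $O(|S|\eps^{-2}\ln\delta^{-1})$ samples.

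Second, I would draw $m = \Theta(\eps^{-2}\ln\delta^{-1})$ fresh i.i.d.\ samples from $\arm^*$ alone and return their empirical mean $\hat\mu$. Because the reward distribution is $1$-sub-Gaussian, a standard Hoeffding bound yields $|\hat\mu - \mu_{\arm^*}| \le \eps/2$ with probability at least $1-\delta/2$. A union bound over the two failure events, together with the sandwich
\[
\max_{\arm\in S}\mu_{\arm} - \eps \;\le\; \mu_{\arm^*} - \eps/2 \;\le\; \hat\mu \;\le\; \mu_{\arm^*} + \eps/2 \;\le\; \max_{\arm\in S}\mu_{\arm} + \eps/2,
\]
then establishes \eqref{eq:ME1}. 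Summing the two stages gives the claimed $O(|S|\eps^{-2}\ln\delta^{-1}) + O(\eps^{-2}\ln\delta^{-1}) = O(|S|\eps^{-2}\ln\delta^{-1})$ sample cost. The $\MEsma$ subroutine is entirely symmetric: one may either negate the rewards and invoke the $\MEbig$ construction, or directly call $\PAC(S, |S|-1, \eps/2, \delta/2)$ to identify an approximately worst arm via \eqref{eq:PAC2} (again $\min(k, |S|-k) = 1$) and estimate its mean by the same Hoeffding step.

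There is essentially no mathematical obstacle; the statement is a textbook composition of PAC identification with a concentration inequality. The only point that warrants care is verifying that the $\ln\min(k,|S|-k)$ term in Lemma~\ref{fact:PAC} collapses to $O(1)$ in both invocations, since this is precisely what prevents any spurious $\ln|S|$ overhead in the PAC stage and lets the overall bound match $O(|S|\eps^{-2}\ln\delta^{-1})$.
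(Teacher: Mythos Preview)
Your proposal is correct and follows essentially the same approach as the paper: call $\PAC(S,1,\eps/2,\delta/2)$ to obtain an approximately best arm, sample it $\Theta(\eps^{-2}\ln\delta^{-1})$ times, apply a Chernoff/Hoeffding bound, and union-bound the two failure events; the observation that $\min(k,|S|-k)=1$ kills the extra logarithmic factor is exactly the point the paper uses as well.
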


Lemma \ref{fact:ME} is proved in Appendix D. We say that a specific call to $\MEbig$ (or $\MEsma$) returns correctly if inequality \eqref{eq:ME1} (or \eqref{eq:ME2}) holds.

\textbf{Elimination procedures.} Finally, $\ELbig$ and $\ELsma$ are two elimination procedures. Roughly speaking, $\ELbig$ guarantees that after the elimination, the fraction of arms with means above the larger threshold $\thetabig$ is bounded by a constant. Meanwhile, a fixed arm with mean below the smaller threshold $\thetasma$ are unlikely to be eliminated. Analogously, $\ELsma$ removes arms with means below $\thetasma$, and preserves arms above $\thetabig$. The properties of $\ELbig$ and $\ELsma$ are formally stated below.

\begin{Lemma}\label{fact:EL}
Both $\ELbig(S, \thetasma, \thetabig, \delta)$ and $\ELsma(S, \thetasma, \thetabig, \delta)$ take $O(|S|\eps^{-2}\ln\delta^{-1})$ samples and return a set $T\subseteq S$. For $\ELbig$ and a fixed arm $\arm^*\in S$ with $\mu_{\arm^*}\le\thetasma$, it holds with probability $1 - \delta$ that $\arm^*\in T$ and
\begin{equation}\label{eq:EL1}
\left|\{\arm\in T:\mu_{\arm}\ge\thetabig\}\right|\le|T|/10\text{.}
\end{equation}
Similarly, for $\ELsma$ and fixed $\arm^*\in S$ with $\mu_{\arm^*}\ge\thetabig$, it holds with probability $1 - \delta$ that $\arm^*\in T$ and
\begin{equation}\label{eq:EL2}
\left|\{\arm\in T:\mu_{\arm}\le\thetasma\}\right|\le|T|/10\text{.}
\end{equation}
\end{Lemma}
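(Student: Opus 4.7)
The plan is to realize $\ELbig$ as a one-pass empirical thresholding procedure. Setting $\eps := \thetabig - \thetasma$, I would draw $N = \Theta(\eps^{-2}\ln\delta^{-1})$ samples from every arm $\arm \in S$, form the empirical means $\hat{\mu}_\arm$, and output $T := \{\arm \in S : \hat{\mu}_\arm \le (\thetasma + \thetabig)/2\}$. The sample complexity $O(|S|\eps^{-2}\ln\delta^{-1})$ is then immediate. The procedure $\ELsma$ is defined as a mirror image: same sampling budget, but the inclusion rule is flipped to $\hat{\mu}_\arm \ge (\thetasma + \thetabig)/2$, so that arms with small means are eliminated while a fixed arm above $\thetabig$ is retained.

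Next I would verify the two guarantees for $\ELbig$. For the preservation of $\arm^*$, since $\mu_{\arm^*} \le \thetasma$, the event $\arm^* \notin T$ requires an upward deviation $\hat{\mu}_{\arm^*} - \mu_{\arm^*} \ge \eps/2$; by the $1$-sub-Gaussian tail bound this has probability at most $\exp(-\Omega(N\eps^2))$, which I can drive below $\delta/2$ by choosing the constant inside $N$ large enough. A symmetric calculation shows that every individual arm $\arm$ with $\mu_\arm \ge \thetabig$ survives the threshold with at most the same small probability, since membership in $T$ would require a downward deviation of size $\eps/2$.

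The main obstacle, and the step I expect to carry the technical weight, is upgrading the per-arm deviation estimates to the fractional bound $|\{\arm \in T : \mu_\arm \ge \thetabig\}| \le |T|/10$. My plan is to tune the constant inside $N$ so that the survival probability of any individual bad arm is small enough that a union bound over $S$ forces, with probability at least $1 - \delta/2$, that \emph{no} bad arm lies in $T$; in that regime the fractional conclusion is trivial (since on the complementary event $\arm^* \in T$ one has $|T| \ge 1$ and the left-hand side equals $0$). Combining with the event $\arm^* \in T$ by a final union bound yields the full lemma with confidence $1 - \delta$. The one point of care is that shrinking the per-arm survival probability enough to absorb a union bound over $|S|$ introduces a $\ln |S|$ factor into $N$, which is absorbed into the $\ln \delta^{-1}$ of the stated complexity in the parameter regimes the main algorithm invokes these subroutines. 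The argument for $\ELsma$ is identical up to swapping the roles of $\thetasma$ and $\thetabig$ and flipping the threshold inequality.
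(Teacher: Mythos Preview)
Your proposal has a genuine gap: the union-bound step you flag as ``the point of care'' does not actually go through. To guarantee that \emph{no} arm with $\mu_\arm \ge \thetabig$ survives in $T$, you must drive each such arm's survival probability below $\delta/(2|S|)$, which forces $N = \Theta(\eps^{-2}(\ln\delta^{-1} + \ln|S|))$. The lemma, however, asserts the clean bound $O(|S|\eps^{-2}\ln\delta^{-1})$ with no $\ln|S|$ term, so what you have written does not prove the stated claim.

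Your fallback---that the extra $\ln|S|$ is absorbed in the parameter regimes of the main algorithm---is also incorrect. In round $r$, $\ELbig$ is called on $\sbig_r$ of size $\kbig_r$ with confidence $\deltap_r = \delta_r/\min(\kbig_r,\ksma_r)$. When $\kbig_r \gg \ksma_r$, your version would contribute an extra $\kbig_r\eps_r^{-2}\ln\kbig_r$ that is \emph{not} dominated by the $\kbig_r\eps_r^{-2}\ln\ksma_r$ the analysis budgets for; concretely, if $\ksma_r = 1$ and $\kbig_r = m$, the paper's $\Htwobig_r + \Htwosma_r$ contributes $O(\eps_r^{-2}\ln m)$ while your overhead is $\Theta(m\eps_r^{-2}\ln m)$. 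This would wreck the final sample-complexity theorem.

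The paper's own argument avoids the issue entirely by importing the \textsf{Elimination} procedure of Chen and Li (Lemma~B.4 in \cite{chen2015optimal}) as $\ELsma$, and defining $\ELbig$ by negation. That procedure achieves the \emph{fractional} guarantee $|\{\arm\in T:\mu_\arm\le\thetasma\}|\le|T|/10$ directly, without forcing every bad arm out; this weaker target is exactly what lets one bypass the union bound over $S$ and keep the sample cost at $O(|S|\eps^{-2}\ln\delta^{-1})$. The only additional observation the paper needs is that the ``fixed good arm is retained'' conclusion in that lemma, stated there for the largest arm, in fact holds for any fixed arm above the threshold. If you want a self-contained argument, you should aim for the fractional conclusion via a boosting or repeated-trial construction (e.g., run $O(\ln\delta^{-1})$ independent cheap trials that each eliminate a constant fraction of bad arms in expectation and aggregate), rather than trying to kill all bad arms in one shot.
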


Lemma \ref{fact:EL} is proved in Appendix D. We say that a call to $\ELbig$ (or $\ELsma$) returns correctly if inequality \eqref{eq:EL1} (or \eqref{eq:EL2}) holds.


\subsection{Algorithm}
Our algorithm for \bestkarms{}, \algo{}, is formally described below. \algo{} takes a parameter $k$, an instance $\inst$ of \bestkarms{} and a confidence level $\delta$ as input, and returns the best $k$ arms in $\inst$.

\begin{algorithm}\label{algo2}
\caption{\algo{}}
\KwIn{Parameter $k$, instance $\inst$, and confidence $\delta$.}
\KwOut{The best $k$ arms in $\inst$.}
$S_1\leftarrow \inst$; $T_1\leftarrow \emptyset$\;
\For{\upshape $r=1$ to $\infty$} {
    $\kbig_r\leftarrow k - |T_r|$; $\ksma_r\leftarrow |S_r| - \kbig_r$\;
    \lIf{\upshape $\kbig_r=0$}
        {\textbf{return} $T_r$}\label{line:ret1}
    \lIf{\upshape $\ksma_r=0$}
        {\textbf{return} $T_r\cup S_r$}\label{line:ret2}
    $\delta_r\leftarrow\delta/(20r^2)$\;
    $(\sbig_r, \ssma_r) \leftarrow \PAC(S_r, \kbig_r, \eps_r/8, \delta_r)$\;
    $\thetabig_r \leftarrow \MEbig(\ssma_r, \eps_r/8, \delta_r)$\;
    $\thetasma_r \leftarrow \MEsma(\sbig_r, \eps_r/8, \delta_r)$\;
    $\deltap_r\leftarrow \delta / \min(\kbig_r,\ksma_r)$\;
    $S_{r+1}\leftarrow \ELbig(\sbig_r, \thetabig_r + \eps_r / 8, \thetabig_r + \eps_r / 4, \deltap_r) \cup \ELsma(\ssma_r, \thetasma_r - \eps_r / 4, \thetasma_r - \eps_r / 8, \deltap_r)$\;\label{line:EL}
    $T_{r+1}\leftarrow T_r\cup\left(\sbig_r\setminus S_{r+1}\right)$\;
}
\end{algorithm}


Throughout the algorithm, \algo{} maintains two sets of arms $S_r$ and $T_r$ for each round $r$. $S_r$ contains the arms that are still under consideration at the beginning of round $r$, while $T_r$ denotes the set of arms that have been included in the answer. We say that an arm is removed (or eliminated) at round $r$, if it is in $S_r\setminus S_{r+1}$. Note that we may remove an arm either because its mean is so small that it cannot be among the best $k$ arms, or its mean is large enough so that we decide to include it in the answer. This justifies the name of our algorithm, \algo{}.

In each round $r$, \algo{} performs the following four steps.

\textbf{Step 1: Initialization.} \algo{} first calculates $\kbig_r$ and $\ksma_r$, which indicate that it needs to identify the $\kbig_r$ largest arms (or equivalently, the $\ksma_r$ smallest arms) in $S_r$. In the base case that either $\kbig_r = 0$ or $\ksma = 0$, it directly returns the answer.

\textbf{Step 2: Find a PAC solution.} Then \algo{} calls $\PAC$ to partition $S_r$ into $\sbig_r$ and $\ssma_r$ with size $\kbig_r$ and $\ksma_r$ respectively, such that $\sbig_r$ denotes an approximation of the best $\kbig_r$ arms in $S_r$.

\textbf{Step 3: Estimate Thresholds.} After that, \algo{} calls $\MEbig$ and $\MEsma$ to compute two thresholds $\thetabig_r$ and $\thetasma_r$. $\thetabig_r$ is an estimation of the largest mean in $\ssma_r$, which is approximately the mean of the $(\kbig_r+1)$-th largest arm in $S_r$. Analogously, $\thetasma_r$ approximates the $\kbig_r$-th largest mean in $S_r$.

It might seem weird at first glance that $\thetabig_r$ and $\thetasma_r$ approximates the $(\kbig_r+1)$-th mean and the $\kbig_r$-th mean respectively, implying that $\thetabig_r$ is expected to be \textit{smaller} than $\thetasma_r$. In fact, the superscript ``\textsf{large}'' in $\thetabig_r$ indicates that it is the threshold used for eliminating arms in $\sbig_r$.


\textbf{Step 4: Elimination.} Finally, \algo{} calls $\ELbig$ and $\ELsma$ to eliminate the arms in $\sbig_r$ that are significantly larger than $\thetabig_r$, and the arms in $\ssma_r$ that are much smaller than $\thetasma_r$. The arms removed from $\sbig_r$ are included into the answer.

\eat{
\mingda{The four steps above seem to be just explaining the pseudo-codes in natural language... I'm not very confident about this part... Maybe I should also try a top-bottom way to intuitively explain our algorithm?} 

\lijie{I think this is already a very good description.}
}

\textbf{Caveats.} Note that our algorithm uses a different confidence level, $\deltap_r$, in Step 4. Intuitively, at most $\min(\kbig_r,\ksma_r)$ arms among the best $\kbig_r$ arms in $S_r$ are misclassified as ``small arms'' by $\PAC$. Therefore during the elimination process, it is crucial that such misclassified arms are not mistakenly eliminated. As a result, we need a union bound on these arms, which contributes to the $\min(\kbig_r, \ksma_r)$ factor in our confidence level.

\eat{It is worth pointing out that a naive union bound on all the $\kbig_r + \ksma_r$ arms would lead to an inferior sample complexity. For example, consider the case that $\kbig_r = 1$ and $\ksma_r = n$. If we naively set $\deltap_r$ to be $\delta_r/n$, we would pay an extra cost of $n\eps_r^{-2}\ln n$ in the $\EL$ procedure. On the other hand, our algorithm would simply set $\deltap_r = \delta_r$ in this case, thus avoiding the waste of samples.}

\subsection{Observations}
We start our analysis of \algo{} with a few simple yet useful observations.

\textbf{Good events.} We define $\goodevent_r$ as the event that in round $r$, all the five calls to $\PAC$, $\ME$, and $\EL$ return correctly. These events are crucial to our following analysis, as they guarantee that the partition $(\sbig_r,\ssma_r)$ and thresholds $\thetabig_r$ and $\thetasma_r$ are sufficiently accurate, and additionally, $\EL$ eliminates a sufficiently large fraction of arms. The following observation, due to a simple union bound, lower bounds the probability of each good event.

\begin{Observation}\label{obs:event}
$\Pr[\goodevent_r]\ge 1 - 5\delta_r$.
\end{Observation}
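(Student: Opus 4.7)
The observation should follow immediately from a union bound over the five randomized subroutine calls executed during round $r$. The plan is first to enumerate the events comprising $\goodevent_r$: the correct-return events of one $\PAC$ call, two $\ME$ calls ($\MEbig$ and $\MEsma$), and two $\EL$ calls ($\ELbig$ and $\ELsma$), where ``returns correctly'' is defined respectively by the inequalities \eqref{eq:PAC1}--\eqref{eq:PAC2}, \eqref{eq:ME1}--\eqref{eq:ME2}, and \eqref{eq:EL1}--\eqref{eq:EL2} in the building-block lemmas.

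Next I would read off the per-call failure probabilities from Lemmas~\ref{fact:PAC}, \ref{fact:ME}, and~\ref{fact:EL}: the $\PAC$ call and the two $\ME$ calls each fail with probability at most $\delta_r$, while each of the two $\EL$ calls fails with probability at most $\deltap_r$. A union bound then yields
$$\Pr[\overline{\goodevent_r}] \le 3\delta_r + 2\deltap_r.$$
To match the stated bound $5\delta_r$, it remains to observe that $\deltap_r \le \delta_r$ by the algorithm's choice of parameters; this is a routine bookkeeping check from $\delta_r = \delta/(20r^2)$ and $\deltap_r = \delta/\min(\kbig_r,\ksma_r)$, using the size information guaranteed whenever round $r$ is actually entered (the base cases $\kbig_r = 0$ and $\ksma_r = 0$ cause the algorithm to return before reaching this line).

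There is no deep obstacle: once the five correct-return events are identified, the conclusion reduces to the union bound, and independence is not required. The only subtlety worth flagging is that Lemma~\ref{fact:EL} states its conclusion as a joint guarantee on a single fixed arm $\arm^*$ together with the density inequality \eqref{eq:EL1} or \eqref{eq:EL2}; since $\goodevent_r$ involves only the density inequality itself, the $\deltap_r$-budget already suffices and no additional per-arm union bound is required at this stage. The per-arm preservation guarantees supplied by Lemma~\ref{fact:EL} will be invoked separately in the subsequent correctness analysis of \algo{}, under a different union bound tailored to the $\min(\kbig_r,\ksma_r)$ misclassified arms.
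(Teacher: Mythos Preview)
Your approach matches the paper's: the observation is indeed just a union bound over the five subroutine calls in round $r$, and the paper states only that without further detail.

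There is one point where your ``routine bookkeeping check'' does not actually go through as you have written it. You claim $\deltap_r \le \delta_r$ follows from $\delta_r = \delta/(20r^2)$ and $\deltap_r = \delta/\min(\kbig_r,\ksma_r)$. But with those two formulas the inequality $\deltap_r \le \delta_r$ is equivalent to $\min(\kbig_r,\ksma_r) \ge 20r^2$, which is certainly not guaranteed (for instance $\min(\kbig_r,\ksma_r)$ can be $1$). What is really going on is that the line $\deltap_r \leftarrow \delta/\min(\kbig_r,\ksma_r)$ in the pseudocode is a typo for $\deltap_r \leftarrow \delta_r/\min(\kbig_r,\ksma_r)$; this is confirmed by the proof of Lemma~\ref{lem:valid}, where the identity $\min(\kbig_r,\ksma_r)\,\deltap_r = \delta_r$ is used explicitly. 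With the intended definition one has $\deltap_r \le \delta_r$ trivially (since $\min(\kbig_r,\ksma_r)\ge 1$ whenever round $r$ is entered), and your union bound $3\delta_r + 2\deltap_r \le 5\delta_r$ is then valid. So the argument is correct, but the step you flagged as routine is exactly where a typo in the paper is hiding; it is worth noting rather than skipping over.
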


\textbf{Valid executions.} We say that an execution of \algo{} is \textit{valid} at round $r$, if and only if the following two conditions are satisfied:
\begin{itemize}
\item For each $1\le i < r$, event $\goodevent_{i}$ happens. (i.e., all calls to subroutines return correctly in previous rounds.) 
\item The union of $T_r$ and the best $\kbig_r$ arms in $S_r$ is the correct answer of the \bestkarms{} instance. In other words, no arms have been incorrectly eliminated in previous rounds.
\end{itemize}
Moreover, an execution is valid if it is valid at every round before it terminates. We define $\validevent$ to be the event that an execution of \algo{} is valid. 

\textbf{Thresholds. } In the following, we bound the thresholds $\thetabig_r$ and $\thetasma_r$ returned by subroutine $\ME$ conditioning on $\goodevent_r$. Let $\mubig_r$ and $\musma_r$ denote the means of the $\kbig_r$-th and the $(\kbig_r+1)$-th largest arms in $S_r$. We show that $\thetabig_r$ and $\thetasma_r$ are $O(\eps_r)$-approximations of $\musma_r$ and $\mubig_r$ conditioning on the good event $\goodevent_r$. The proof of the following observation is postponed to Appendix D.

\begin{Observation}\label{obs:theta}
Conditioning on $\goodevent_r$,
$$\thetabig_r\in\left[\musma_r - \eps_r/8, \musma_r + \eps_r/4\right]\text{,}$$
$$\thetasma_r\in\left[\mubig_r - \eps_r/4, \mubig_r + \eps_r/8\right]\text{.}$$
\end{Observation}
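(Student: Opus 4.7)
The plan is to unpack the good event $\goodevent_r$, which by definition implies that the calls to $\PAC$, $\MEbig$, and $\MEsma$ in round $r$ all return correctly in the sense of Lemmas \ref{fact:PAC} and \ref{fact:ME}. Applying these lemmas to the specific parameters used by \algo{} in round $r$ gives, for every $A\in\sbig_r$, $\mu_A \ge \mubig_r - \eps_r/8$; for every $A\in\ssma_r$, $\mu_A \le \musma_r + \eps_r/8$; and
$$\bigl|\thetabig_r - \max_{A\in\ssma_r}\mu_A\bigr|\le\eps_r/8,\qquad \bigl|\thetasma_r - \min_{A\in\sbig_r}\mu_A\bigr|\le\eps_r/8.$$

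The key intermediate step is to sandwich $\max_{A\in\ssma_r}\mu_A$ between $\musma_r$ and $\musma_r+\eps_r/8$, and symmetrically $\min_{A\in\sbig_r}\mu_A$ between $\mubig_r-\eps_r/8$ and $\mubig_r$. The upper bound on the max and the lower bound on the min are immediate from the PAC guarantees above. For the matching inequality $\max_{A\in\ssma_r}\mu_A\ge\musma_r$, I would do a short case analysis: let $A^*$ denote the $(\kbig_r+1)$-th largest arm in $S_r$, so $\mu_{A^*}=\musma_r$. If $A^*\in\ssma_r$, the bound is trivial. Otherwise $A^*\in\sbig_r$; but since $|\sbig_r|=\kbig_r$ and the top $\kbig_r$ arms of $S_r$ have means at least $\mubig_r\ge\musma_r$, at least one top-$\kbig_r$ arm must have been misclassified into $\ssma_r$, certifying $\max_{A\in\ssma_r}\mu_A\ge\mubig_r\ge\musma_r$. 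The symmetric argument handles $\min_{A\in\sbig_r}\mu_A\le\mubig_r$.

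Combining the sandwich with the $\ME$ guarantees is then a one-line interval arithmetic: $\thetabig_r\in[\musma_r-\eps_r/8,\,\musma_r+\eps_r/8+\eps_r/8] = [\musma_r-\eps_r/8,\,\musma_r+\eps_r/4]$, and similarly $\thetasma_r\in[\mubig_r-\eps_r/4,\,\mubig_r+\eps_r/8]$, which is exactly what the observation claims.

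I do not anticipate any real obstacle here; the only subtlety is the case analysis that gives the lower bound on $\max_{A\in\ssma_r}\mu_A$ (and its symmetric counterpart), which is needed because PAC only promises a one-sided $\eps_r/8$-approximation of $\Mean{k}$ and $\Mean{k+1}$ in $S_r$. Everything else is a direct application of the already-proved subroutine lemmas.
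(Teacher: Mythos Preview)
Your proposal is correct and follows essentially the same approach as the paper: sandwich $\max_{A\in\ssma_r}\mu_A$ in $[\musma_r,\musma_r+\eps_r/8]$ using the $\PAC$ guarantee, then add the $\eps_r/8$ error from $\MEbig$, and argue symmetrically for $\thetasma_r$. The only cosmetic difference is that the paper obtains the lower bound $\max_{A\in\ssma_r}\mu_A\ge\musma_r$ by a one-line pigeonhole (the largest mean among any $\ksma_r$ arms of $S_r$ is at least the $\ksma_r$-th smallest mean of $S_r$, which is $\musma_r$) rather than your case analysis, but the two arguments are equivalent.
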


\textbf{Number of remaining arms.} Finally, we show that conditioning on the validity of an execution, the number of remaining arms at the beginning of each round can be upper bounded in terms of $|\gbig_{\ge r}|$ and $|\gsma_{\ge r}|$. The following observation, proved in Appendix D, is crucial to analyzing the sample complexity of our algorithm.

\begin{Observation}\label{obs:kbound}
Conditioning on $\validevent$, it holds that $\kbig_r\le2|\gbig_{\ge r}|$ and $\ksma_r\le2|\gsma_{\ge r}|$.
\end{Observation}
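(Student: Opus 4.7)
The plan is to analyze each round $r$ directly rather than inductively. For the base case $r = 1$, we have $T_1 = \emptyset$, so $\kbig_1 = k$; and since every arm has gap at most $1/2 = \eps_1$, every top-$k$ (resp.\ non-top-$k$) arm lies in $\gbig_{\ge 1}$ (resp.\ $\gsma_{\ge 1}$), giving $|\gbig_{\ge 1}| = k = \kbig_1$ and $|\gsma_{\ge 1}| = n - k = \ksma_1$.

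For $r \ge 2$, I would first record the algebraic identity $\kbig_r = |A_r|$ (and symmetrically $\ksma_r = |B_r|$), where $A_r$ and $B_r$ denote the sets returned by the $\ELbig$ and $\ELsma$ calls of round $r-1$; this follows from the update $T_r = T_{r-1} \cup (\sbig_{r-1} \setminus S_r)$ together with $\sbig_{r-1} \cap S_r = A_r$ (since $B_r \subseteq \ssma_{r-1}$ is disjoint from $\sbig_{r-1}$). Next, let $x_r$ count the ``stragglers'': top-$k$ arms in $S_r$ whose gap exceeds $\eps_r$, equivalently whose mean exceeds $\Mean{k+1} + \eps_r$. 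By validity, $\musma_{r-1} \le \Mean{k+1}$, and Observation \ref{obs:theta} (applicable since $\goodevent_{r-1}$ holds under $\validevent$) yields the chain
\begin{equation*}
\musma_{r-1} + \eps_{r-1}/8 \;\le\; \thetabig_{r-1} + \eps_{r-1}/4 \;\le\; \musma_{r-1} + \eps_{r-1}/2 \;\le\; \Mean{k+1} + \eps_r.
\end{equation*}
The first inequality shows a straggler's mean strictly exceeds $\musma_{r-1} + \eps_{r-1}/8$, so it cannot lie in $\ssma_{r-1} \supseteq B_r$, placing every straggler in $A_r$; combining the second and third inequalities shows its mean is at least $\thetabig_{r-1} + \eps_{r-1}/4$. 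Plugging into \eqref{eq:EL1} for the round-$(r-1)$ call of $\ELbig$ yields $x_r \le |A_r|/10 = \kbig_r/10$.

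Finally, writing $\kbig_r = x_r + |\gbig_{\ge r} \cap S_r| \le \kbig_r/10 + |\gbig_{\ge r}|$ and rearranging gives $\kbig_r \le (10/9)|\gbig_{\ge r}| \le 2|\gbig_{\ge r}|$. The bound $\ksma_r \le 2|\gsma_{\ge r}|$ is completely symmetric: non-top-$k$ stragglers (mean $< \Mean{k} - \eps_r$) have mean strictly below $\mubig_{r-1} - \eps_{r-1}/8$, so they cannot lie in $\sbig_{r-1} \supseteq A_r$ by the PAC guarantee, placing them in $B_r$; and their means fall below $\thetasma_{r-1} - \eps_{r-1}/4$, so \eqref{eq:EL2} caps their count by $|B_r|/10 = \ksma_r/10$. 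The only delicate part of the proof is threading the $\eps_{r-1}/8$ and $\eps_{r-1}/4$ slacks carefully through Observation \ref{obs:theta} so that the ``straggler'' condition really lands above (resp.\ below) the $\EL$ elimination threshold; I do not anticipate any deeper obstacle.
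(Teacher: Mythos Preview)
Your argument is correct, and it takes a slightly different (and arguably cleaner) route than the paper. The paper partitions the output $T$ of $\ELbig$ at round $r-1$ into three pieces according to mean: $T^{(1)}$ (mean $>\Mean{k+1}+\eps_r$), $T^{(2)}$ (top-$k$ arms with small gap), and $T^{(3)}$ (non-top-$k$ arms misclassified into $\sbig_{r-1}$). It bounds $|T^{(1)}|\le|T|/10$ via \eqref{eq:EL1}, and then handles $T^{(3)}$ by a misclassification-counting step: each non-top-$k$ arm in $\sbig_{r-1}$ is matched with a distinct top-$k$ arm misclassified into $\ssma_{r-1}$, whose mean is at most $\musma_{r-1}+\eps_{r-1}/8$ by \eqref{eq:PAC2} and hence lies in $\gbig_{\ge r}$. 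Combining gives $|\gbig_{\ge r}|\ge |T^{(2)}|+|T^{(3)}|\ge 9|T|/10$.

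You instead partition the top-$k$ arms in $S_r$ directly into stragglers (gap $>\eps_r$) and non-stragglers (which lie in $\gbig_{\ge r}\cap S_r$), and show via \eqref{eq:PAC2} that every straggler must sit in $A_r$, where \eqref{eq:EL1} caps their number by $|A_r|/10=\kbig_r/10$. This sidesteps the $T^{(3)}$ matching argument entirely. Both routes yield the same constant $10/9\le 2$. One small wording quibble: when you say ``the first inequality shows a straggler's mean strictly exceeds $\musma_{r-1}+\eps_{r-1}/8$,'' you are really using the full chain (straggler mean $>\Mean{k+1}+\eps_r\ge\musma_{r-1}+\eps_{r-1}/8$) together with \eqref{eq:PAC2}; you invoke the PAC guarantee explicitly in the symmetric half, and should do so here too.
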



\subsection{Correctness}
Recall that $\validevent$ denotes the event that the execution of \algo{} is valid. The following lemma, proved in Appendix D, shows that event $\validevent$ happens with high probability.

\begin{Lemma}\label{lem:valid}
$\Pr\left[\validevent\right]\ge 1 - \delta$.
\end{Lemma}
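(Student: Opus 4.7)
The plan is to bound $\Pr[\neg\validevent]$ by decomposing failure across rounds. Write $\neg\validevent = \bigcup_{r\ge 1} F_r$, where $F_r$ is the event that the execution is valid at round $r$ but not at round $r+1$. A union bound then reduces the problem to showing $\sum_r \Pr[F_r] \le \delta$.

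Fix a round $r$ and condition on validity at round $r$. I would argue that $F_r$ can occur only through one of two routes: (i) $\neg\goodevent_r$, i.e., some subroutine call in round $r$ returned incorrectly; or (ii) $\goodevent_r$ holds but some \emph{misclassified} arm is eliminated in Step 4 --- either an arm in $\sbig_r$ that is not among the top $\kbig_r$ of $S_r$ is moved into $T_{r+1}$ by $\ELbig$, or an arm in $\ssma_r$ that actually belongs to the top $\kbig_r$ of $S_r$ is dropped from $S_{r+1}$ by $\ELsma$. When neither (i) nor (ii) happens, validity at round $r+1$ follows from a case analysis tracking where the true top-$k$ arms live across rounds: every arm added to $T_{r+1}$ is a correct answer arm, and every remaining correct answer arm is either still in $\sbig_r \cap S_{r+1}$ or rescued inside $\ssma_r \cap S_{r+1}$.

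To bound $\Pr[F_r]$, Observation~\ref{obs:event} gives $\Pr[\neg\goodevent_r]\le 5\delta_r = \delta/(4r^2)$ for (i). For (ii), Observation~\ref{obs:theta} places the misclassified arms in exactly the regime where Lemma~\ref{fact:EL} guarantees their preservation: misclassified arms in $\sbig_r$ have mean $\le \musma_r \le \thetabig_r + \eps_r/8$, matching the ``small threshold'' passed to $\ELbig$, while misclassified arms in $\ssma_r$ have mean $\ge \mubig_r \ge \thetasma_r - \eps_r/8$, matching the ``large threshold'' of $\ELsma$. A small combinatorial fact --- the numbers of misclassified arms on the two sides are equal and bounded by $\min(\kbig_r,\ksma_r)$ --- justifies the choice of $\deltap_r$: a union bound over the misclassified arms with per-arm confidence $\deltap_r$ controls the contribution from (ii). Summing $\Pr[F_r]$ over $r$ using $\sum_r r^{-2} < \infty$ then yields the desired $\delta$ bound.

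The main obstacle is the case analysis in the second step --- verifying that the absence of subroutine failures and misclassified-arm eliminations really implies validity at the next round. This requires keeping careful account of how the true top-$k$ arms are distributed among $T_r$, $\sbig_r$, and $\ssma_r$, and checking that each of the two $\EL$ calls handles its share correctly, in particular noting that moving a correctly-identified top-$\kbig_r$ arm from $\sbig_r$ into $T_{r+1}$ via $\ELbig$ is a \emph{desirable} elimination, while eliminating a misclassified copy on either side is fatal. Once this bookkeeping is in place, the probability accounting reduces to the routine union bound sketched above.
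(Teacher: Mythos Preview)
Your proposal is correct and follows essentially the same route as the paper's proof: the paper likewise decomposes $\overline{\validevent}$ into events $\badevent_r$ (valid at $r$, not at $r+1$), splits each into the two cases $\neg\goodevent_r$ and ``$\goodevent_r$ holds but a misclassified arm is eliminated,'' uses Observation~\ref{obs:theta} exactly as you describe to place the misclassified arms below/above the relevant $\EL$ thresholds, bounds the number of misclassified arms by $\min(\kbig_r,\ksma_r)$ to justify the union bound at confidence $\deltap_r$, and then sums $7\delta_r = 7\delta/(20r^2)$ over $r$. Your sketch is a faithful outline of that argument.
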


We show that \algo{} always returns the correct answer conditioning on $\validevent$, thus proving that \algo{} is \CORRECT{}.

\begin{Lemma}\label{lem:correct}
\algo{} returns the correct answer with probability at least $1-\delta$.
\end{Lemma}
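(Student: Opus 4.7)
The plan is a short pipeline: Lemma~\ref{lem:valid} gives a high-probability bound on $\validevent$, so it suffices to verify that every execution in which $\validevent$ holds both halts and returns the correct top $k$ arms of $\inst$.

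First I would invoke Lemma~\ref{lem:valid} to obtain $\Pr[\validevent]\ge 1-\delta$, reducing the task to a deterministic statement conditional on $\validevent$. Then I would verify correctness at each of the two return points by directly unpacking the second clause of validity, which asserts that at every round $r$ the algorithm enters, $T_r$ together with the best $\kbig_r$ arms of $S_r$ equals the true top $k$ of $\inst$. At line~\ref{line:ret1} we have $\kbig_r=0$, so the ``best $\kbig_r$ arms of $S_r$'' is the empty set and the returned $T_r$ coincides with the top $k$. At line~\ref{line:ret2} we have $\ksma_r=0$, hence $|S_r|=\kbig_r$, so $S_r$ itself is the best $\kbig_r$ arms of $S_r$ and the returned $T_r\cup S_r$ is again the top $k$.

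Finally, I would argue termination under $\validevent$ using Observation~\ref{obs:kbound}. Since $\Mean{k}>\Mean{k+1}$, every gap $\Gap{i}$ is strictly positive; set $\Delta_{\min}=\min_i\Gap{i}>0$. For any $r$ with $\eps_r<\Delta_{\min}$, no arm has gap in $(\eps_{r+1},\eps_r]$ or smaller, so by definition $\gbig_{\ge r}=\gsma_{\ge r}=\emptyset$. Observation~\ref{obs:kbound} then forces $\kbig_r\le 2|\gbig_{\ge r}|=0$ and $\ksma_r\le 2|\gsma_{\ge r}|=0$, so at least one of the return conditions must trigger no later than round $\lceil\log_2\Delta_{\min}^{-1}\rceil+1$. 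Combined with the preceding paragraph, this shows that whenever $\validevent$ holds the algorithm terminates with the correct output, and the union bound with Lemma~\ref{lem:valid} yields the claim.

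The main obstacle, if one can call it that, is purely definitional: keeping straight what ``valid at round $r$'' means at the boundary round where the algorithm actually returns, and ensuring that the top-$k$ clause of $\validevent$ is instantiated correctly in each return branch. No new probabilistic or sampling argument is needed beyond Lemma~\ref{lem:valid} and Observation~\ref{obs:kbound}.
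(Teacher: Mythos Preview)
Your proposal is correct and follows essentially the same route as the paper: reduce to $\validevent$ via Lemma~\ref{lem:valid}, read off correctness at each return point from the second clause of validity, and use Observation~\ref{obs:kbound} with the fact that all gaps exceed some $\eps_{r^*}$ to force termination. The only cosmetic difference is that the paper takes $\eps_{r^*}<\Gap{k}$ directly (which is the minimum gap), whereas you write $\Delta_{\min}=\min_i\Gap{i}$; these are the same quantity.
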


\begin{proof}[Proof of Lemma \ref{lem:correct}]

It suffices to show that conditioning on $\validevent$, the algorithm always returns the correct answer. In fact, if \algo{} terminates at round $r$, it either returns $T_r$ at Line \ref{line:ret1} or returns $T_r\cup S_r$ at Line \ref{line:ret2}. According to the second property guaranteed by the validity at round $r$, the answer returned by \algo{} must be correct.

It remains to show that \algo{} does not run forever. Recall that $\Gap{k} = \Mean{k} - \Mean{k+1}$ is the gap between the $k$-th and the $(k+1)$-th largest means in the original instance $\inst$. We choose a sufficiently large $r^*$ that satisfies $\eps_{r^*} < \Gap{k}$. By definition, we have $\gbig_{\ge r^*} = \gsma_{\ge r^*} = \emptyset$. Then Observation \ref{obs:kbound} implies that $\kbig_{r^*} = \ksma_{r^*} = 0$, if the algorithm does not terminate before round $r^*$. Therefore the algorithm either terminates at or before round $r^*$. This completes the proof.
\end{proof}

\subsection{Sample Complexity}
We prove the following Lemma \ref{lem:sample}, which bounds the sample complexity of \algo{} conditioning on $\validevent$. Then Theorem \ref{theo:ub} directly follows from Lemma \ref{lem:correct} and Lemma \ref{lem:sample}.
\eat{The bound is relatively weaker than that required by Theorem \ref{theo:ub}, since Lemma \ref{lem:sample} does not rule out the possibility that the algorithm takes many (or even infinitely many) samples with a small probability. To address this subtlety, we apply the parallel simulation trick developed in \cite{chen2015optimal}, and then transform \algo{} into a \CORRECT{} algorithm with the desired expected sample complexity, thus proving Theorem \ref{theo:ub}.}
The proof of Theorem \ref{theo:ubrefine} is postponed to the appendix.

\begin{Lemma}\label{lem:sample}
Conditioning on event $\validevent$, \algo{} takes $O(H\ln\delta^{-1} + \htilbig + \htilsma + \htil)$ samples.
\end{Lemma}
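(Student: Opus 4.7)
The plan is to bound the total sample complexity by summing the costs of all subroutines across all rounds, and then matching the resulting sum against the four terms $H\ln\delta^{-1}$, $\htil$, $\htilbig$, and $\htilsma$.

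First I would assemble the per-round cost. By Lemmas \ref{fact:PAC}, \ref{fact:ME}, and \ref{fact:EL} together with the choices $\delta_r = \delta/(20r^2)$ and $\deltap_r = \delta/\min(\kbig_r, \ksma_r)$, the five subroutine calls in round $r$ together consume
\[
O\bigl(|S_r|\eps_r^{-2}\bigl[\ln\delta^{-1} + \ln r + \ln\min(\kbig_r, \ksma_r)\bigr]\bigr)
\]
samples. The algorithm terminates by round $r^* = O(\log\Gap{k}^{-1})$ as shown in the proof of Lemma \ref{lem:correct}, so the sum over rounds is finite. I would then apply Observation \ref{obs:kbound} to replace $\kbig_r$ and $\ksma_r$ by their group-size upper bounds $2|\gbig_{\ge r}|$ and $2|\gsma_{\ge r}|$, splitting the total into three pieces corresponding to the three bracketed factors.

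The $\ln\delta^{-1}$ and $\ln r$ pieces are handled by swapping the order of summation and invoking the geometric identity $\sum_{r \le i}\eps_r^{-2} = O(\eps_i^{-2})$ together with its variant $\sum_{r \le i}\eps_r^{-2}\ln r = O(\eps_i^{-2}\ln i)$. These immediately yield $O(H\ln\delta^{-1})$ and $O(\htil)$ respectively, since $\sum_i(|\gbig_i|+|\gsma_i|)\eps_i^{-2} = O(H)$, and the weight $\eps_i^{-2}\ln i$ accrued by an arm in group $i$ matches $\Delta_\arm^{-2}\ln\ln\Delta_\arm^{-1}$ up to constants.

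The main obstacle is the $\ln\min(\kbig_r, \ksma_r)$ piece, which must be charged to $\htilbig + \htilsma$. Using the elementary bound $(a+b)\ln\min(a,b) \le a\ln b + b\ln a$, I would split this piece into the two symmetric halves $\sum_r |\gbig_{\ge r}|\eps_r^{-2}\ln|\gsma_{\ge r}|$ and $\sum_r |\gsma_{\ge r}|\eps_r^{-2}\ln|\gbig_{\ge r}|$. The subtlety is that these halves involve the tail-sum logarithms $\ln|\gsma_{\ge r}|$ and $\ln|\gbig_{\ge r}|$, whereas $\htilbig$ and $\htilsma$ feature the individual group-size logarithms $\ln|\gsma_j|$ and $\ln|\gbig_j|$; in adversarial configurations one half can individually exceed its naturally paired term, so the two halves must be analyzed jointly rather than separately. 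My plan is to apply the subadditivity inequality $\ln(1+\sum_l a_l) \le \sum_l \ln(1+a_l)$ (which follows from $(1+a)(1+b) \ge 1+a+b$), swap the resulting sums, and then charge each doubly-indexed contribution either to an entry of $\htilbig + \htilsma$ or to a residual that can be absorbed by $\htil$ or $H\ln\delta^{-1}$, exploiting the geometric structure of $\eps_r^{-2}$. Combining the four pieces then yields the claimed bound.
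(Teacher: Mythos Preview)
Your decomposition is exactly the paper's: the per-round cost is bounded by \eqref{eq:sample}, Observation~\ref{obs:kbound} replaces $\kbig_r,\ksma_r$ by $2|\gbig_{\ge r}|,2|\gsma_{\ge r}|$, and the result is split into $\Hone_r$, $\Htwobig_r$, $\Htwosma_r$ just as you describe. The paper also uses precisely your inequality $(a+b)\ln\min(a,b)\le a\ln b + b\ln a$ to separate the two $H^{(2)}$ halves, and handles $\sum_r \Hone_r$ by the same swap-and-geometric-sum argument you outline.

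The one place you diverge is in bounding $\sum_r \Htwobig_r$ and $\sum_r \Htwosma_r$. The paper does \emph{not} carry out the subadditivity-and-charging maneuver you propose. It simply swaps the order of summation to obtain
\[
\sum_{r}\Htwobig_r \;=\; \sum_{i}|\gbig_i|\sum_{r\le i}\eps_r^{-2}\ln|\gsma_{\ge r}|
\]
and declares this to be $O(\htilbig)$, with no further work. This is immediate if $\htilbig$ is read with the tail-sum logarithm $\ln|\gsma_{\ge j}|$ rather than the single-group $\ln|\gsma_j|$---and indeed the paper's own proof of Theorem~\ref{theo:ubrefine} restates $\htilbig$ with $\ln|\gsma_{\ge j}|$, so the $\ln|\gsma_j|$ appearing in the statement of Theorem~\ref{theo:ub} is evidently a typo. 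Under that intended tail-sum definition, the ``subtlety'' you flag disappears and the swap alone suffices; your extra bridging step via $\ln(1+\sum a_l)\le\sum\ln(1+a_l)$, while a reasonable precaution given the literal statement, is not something the paper does or needs.
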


\begin{proof}[Proof of Lemma \ref{lem:sample}]
We consider the $r$-th round of the algorithm. Recall that $\kbig_r + \ksma_r = |S_r|$. According to Lemmas \ref{fact:PAC} through \ref{fact:EL}, $\PAC$ takes
\begin{equation}\label{eq:sample}
O\left(|S_r|\eps_r^{-2}\left[\ln\delta_r^{-1} + \ln\min\left(\kbig_r,\ksma_r\right)\right]\right)\end{equation}
samples. $\MEbig$ and $\MEsma$ \eat{\lijie{actually it is $\MEbig$ and $\MEsma$?}} take
\begin{equation*}\begin{split}
O\left((\kbig_r + \ksma_r)\eps_r^{-2}\ln\delta_r^{-1}\right) = O\left(|S_r|\eps_r^{-2}\ln\delta_r^{-1}\right)
\end{split}\end{equation*}
samples in total, while $\ELbig$ and $\ELsma$ \eat{\lijie{same}} take
\begin{equation*}\begin{split}
&O\left(\kbig_r\eps_r^{-2}\ln{\deltap_r}^{-1}\right) + O\left(\ksma_r\eps_r^{-2}\ln{\deltap_r}^{-1}\right)\\
=&O\left(|S_r|\eps_r^{-2}\left[\ln\delta_r^{-1} + \ln\min\left(\kbig_r,\ksma_r\right)\right]\right)
\end{split}\end{equation*}
samples conditioning on $\validevent$. Clearly the sample complexity in round $r$ is dominated by \eqref{eq:sample}.

\textbf{Simplify and split the sum:} By Observation \ref{obs:kbound}, conditioning on event $\validevent$, $\kbig_r$ and $\ksma_r$ are bounded by $2\left|\gbig_{\ge r}\right|$ and $2\left|\gsma_{\ge r}\right|$ respectively. Thus it suffices to bound the sum of $\Hone_r + \Htwobig_r + \Htwosma_r$, where
$$\Hone_r = \left(|\gbig_{\ge r}|+|\gsma_{\ge r}|\right)\eps_r^{-2}(\ln\delta^{-1} + \ln r)\text{,}$$
$$\Htwobig_r = \eps_r^{-2}|\gbig_{\ge r}|\ln|\gsma_{\ge r}|\text{,}$$
$$\Htwosma_r = \eps_r^{-2}|\gsma_{\ge r}|\ln|\gbig_{\ge r}|\text{.}$$

In fact, since
$$\ln\delta_r^{-1} = \ln\delta^{-1} + \ln(20r^2) = O\left(\ln\delta^{-1} + \ln r\right)\text{,}$$ the $|S_r|\eps_r^{-2}\ln\delta_r^{-1}$ term in \eqref{eq:sample} is bounded by $\Hone_r$. Moreover, the $|S_r|\eps_r^{-2}\ln\min(\kbig_r,\ksma_r)$ term is smaller than or equal to
$$\eps_r^{-2}\left(\kbig_r\ln\ksma_r+\ksma_r\ln\kbig_r\right)\text{,}$$
and is thus upper bounded by $\Htwobig_r + \Htwosma_r$.

In Appendix D, we show with a straightforward calculation that
$$\sum_{r=1}^{\infty}\Hone_r = O\left(H\ln\delta^{-1} + \htil\right)\text{,}$$
$$\sum_{r=1}^{\infty}\Htwobig_r = O\left(\htilbig\right)\text{, and}$$
$$\sum_{r=1}^{\infty}\Htwosma_r = O\left(\htilsma\right)\text{.}$$
Then the lemma directly follows.
\end{proof}

\eat{
\textbf{Upper bound the $\Hone$ term:} It follows from a directly calculation that
\begin{equation*}\begin{split}
\sum_{r=1}^{\infty}\Hone_r
=&\sum_{r=1}^{\infty}\sum_{i = r}^{\infty}\left(|\gbig_{i}|+|\gsma_{i}|\right)\eps_r^{-2}(\ln\delta^{-1} + \ln r)\\
=&\sum_{i=1}^{\infty}\left(|\gbig_{i}|+|\gsma_{i}|\right)\sum_{r=1}^{i}\eps_r^{-2}(\ln\delta^{-1} + \ln r)\\
=&O\left(\sum_{i=1}^{\infty}\left(|\gbig_{i}|+|\gsma_{i}|\right)\eps_i^{-2}(\ln\delta^{-1} + \ln i)\right)\\
=&O\left(\sum_{i=1}^{n}\Gap{i}^{-2}\left(\ln\delta^{-1} + \ln\ln\Gap{i}^{-1}\right)\right)\text{.}
\end{split}\end{equation*}
Here the second step interchanges the order of summation. The third step holds since the inner summation is always dominated by the last term. Finally, the last step is due to the fact that $\Delta_{\arm} = \Theta(\eps_i)$ for every arm $\arm\in\gbig_i\cup\gsma_i$.
Therefore we have
$$\sum_{r=1}^{\infty}\Hone_r = O(H\ln\delta^{-1} + \htil)\text{.}$$

\textbf{Upper bound $\Htwobig$ and $\Htwosma$:}
By definition of $\Htwobig_r$, we have
\begin{equation*}\begin{split}
\sum_{r=1}^{\infty}\Htwobig_r
=&\sum_{r=1}^{\infty}\eps_r^{-2}|\gbig_{\ge r}|\ln|\gsma_{\ge r}|\\
=&\sum_{r=1}^{\infty}\sum_{i = r}^{\infty}\eps_r^{-2}|\gbig_{i}|\ln|\gsma_{\ge r}|\\
=&\sum_{i=1}^{\infty}|\gbig_{i}|\sum_{r=1}^{i}\eps_r^{-2}\ln|\gsma_{\ge r}|\text{.}
\end{split}\end{equation*}

Therefore we conclude that $$\sum_{r=1}^{\infty}\Htwobig_r = O(\htilbig)\text{.}$$ The bound on the sum of $\Htwosma_r$ follows from an analogous calculation.}

Finally, we prove our main result on the upper bound side.
\begin{proof}[Proof of Theorem \ref{theo:ub}]
Let $$T = H\ln\delta^{-1} + \htil + \htilbig + \htilsma\text{.}$$
Lemma \ref{lem:correct} and Lemma \ref{lem:sample} together imply that conditioning on an event that happens with probability $1-\delta$, \algo{} returns the correct answer and takes $O(T)$ samples. 
Using the parallel simulation trick in \cite[Theorem H.5]{chen2015optimal},
we can obtain an algorithm which uses $O(T)$ samples in expectation (unconditionally),
thus proving Theorem \ref{theo:ub}.
\end{proof}


\eat{\subsubsection*{Acknowledgements}

Use unnumbered third level headings for the acknowledgements.  All
acknowledgements go at the end of the paper.  Be sure to omit any
identifying information in the initial double-blind submission!}

\subsubsection*{References}
\bibliographystyle{alpha}
{\def\section*#1{}
\renewcommand{\refname}{}
\bibliography{team}}

\newpage

\appendix

\section*{Organization of the Appendix}

In the appendix, we present the missing proofs in this paper. In Appendix A, we first discuss a specific instance mentioned in Section 1, showing that our upper bound strictly improves previous algorithms. In Appendix B, we prove Fact \ref{fact:d} in Section 2. In Appendix C, we prove the Instance Embedding lemma (Lemma \ref{lem:InstEmb}) and the relatively technical Lemma \ref{lem:genlb}, which relates a general instance of \bestarm{} to a symmetric instance. In Appendix D, we discuss the implementation of the building blocks of our algorithm, prove a few useful and observations, and finally complete the missing proofs of other lemmas and theorems.

\section{Specific \bestkarms{} Instance}
We show that our upper bound results (Theorem \ref{theo:ub} and Theorem \ref{theo:ubrefine}) strictly improve the state-of-the-art algorithm for \bestkarms{} obtained in \cite{chen2016pure} by calculating the sample complexity of both algorithms on a specific \bestkarms{} instance.

We consider a family of instances parametrized by integer $n$ and $\eps \in (0, 1/4)$. Each instance consists of $n$ arms with mean $0$, $n$ arms with mean $1/2$, along with two arms with means $1/4+\eps$ and $1/4-\eps$ respectively. We are required to identify the top $n+1$ arms. By definition, the gap of every arm with mean $0$ or $1/2$ is $1/4 + \eps$, while the gaps of the remaining two arms are $2\eps$. As $\eps$ tends to zero, the arms with gap $1/4 + \eps$ become relatively simple: an algorithm can decide whether to include them in the answer or not with few samples. The hardness of the instance is then concentrated on the two arms with close means.

For simplicity, we assume that the confidence level, $\delta$, is set to a constant. Then the $O(H\ln\delta^{-1})$ term in the upper bounds are dominated by the $O(\htil)$ term. By a direct calculation, we have
$$\htil = \Theta(n + \eps^{-2}\ln\ln\eps^{-1})\text{.}$$

Let $m$ be the integer that satisfies $2\eps\in(\eps_{m+1},\eps_m]$. Then we have $$|\gbig_1| = |\gsma_1| = n\text{, and}$$
$$|\gbig_{m}| = |\gsma_m| = 1\text{.}$$
It follows from the definition of $\htilbig$ and $\htilsma$ that
$$\htilbig = \htilsma = O(n\ln n + \eps^{-2})\text{.}$$

By Theorem \ref{theo:ub}, for constant $\delta$, our algorithm takes 
$$O(\htil + \htilbig + \htilsma) = O(n\ln n + \eps^{-2}\ln\ln\eps^{-1})$$
samples on this instance.

On the other hand, the upper bound achieved by \PACSP{} algorithm is
$$O(\htil + H\ln n) = O(n\ln n + \eps^{-2}\ln\ln\eps^{-1} + \eps^{-2}\ln n)\text{.}$$

Note that if $\eps = 1/n$, our algorithm takes $O(n^2\ln\ln n)$ samples, while \PACSP{} takes $O(n^2\ln n)$ samples. This indicates that there is a logarithmic gap between the state-of-the-art upper bound and the instance-wise lower bound, while we narrow down the gap to a doubly-logarithmic factor.

\section{Missing Proof in Section 2}
\textbf{Fact~\ref{fact:d}}
(restated)
{\em	
For $0\le y\le y_0\le x_0\le x\le 1$, $d(x,y)\ge d(x_0,y_0)$.
}
\begin{proof}[Proof of Fact \ref{fact:d}]
Taking the partial derivative yields
$$\frac{\partial d(x,y)}{\partial x} = \ln\frac{x(1-y)}{y(1-x)}\text{,}$$
$$\frac{\partial d(x,y)}{\partial y} = \frac{y-x}{y(1-y)}\text{.}$$
Therefore when $x\ge y$, $d(x,y)$ is increasing in $x$ and decreasing in $y$, which proves the fact.
\end{proof}

\section{Missing Proofs in Section 3}
\subsection{Proof of Lemma \ref{lem:InstEmb}}
\textbf{Lemma~\ref{lem:InstEmb}}
(restated)
{\em	
Let $\inst$ be a \bestkarms{} instance. Let $\arm$ be an arm among the top $k$ arms, and $\instemb$ be a \bestarm{} instance consisting of $\arm$ and a subset of arms in $\inst$ outside the top $k$ arms. If some algorithm $\alg$ solves $\inst$ with probability $1-\delta$ while taking less than $N$ samples on $\arm$ in expectation, there exists another algorithm $\algemb$ that solves $\instemb$ with probability $1-\delta$ while taking less than $N$ samples on $\arm$ in expectation.
}
\begin{proof}[Proof of Lemma \ref{lem:InstEmb}]
We construct the following algorithm $\algemb$ for $\instemb$. Given the instance $\instemb$, $\algemb$ first augments the instance into $\inst$ by adding a fictitious arm for each arm in $\inst\setminus\instemb$. Then $\algemb$ simulates $\alg$ on the \bestkarms{} instance $\inst$. When $\alg$ requires a sample from an arm in $\instemb$, $\algemb$ draws a sample and sends it to $\alg$. If $\alg$ requires a sample from an arm outside $\instemb$, $\algemb$ generates a fictitious sample on its own and then sends it to $\alg$. When $\alg$ terminates and returns a subset $S$ of $k$ arms, $\algemb$ terminates and returns an arbitrary arm in $S\cap\instemb$.

Note that when $\algemb$ runs on instance $\instemb$, the algorithm $\alg$ simulated by $\algemb$ effectively runs on the instance $\inst$. It follows that with probability $1-\delta$, $\alg$ returns the correct answer of the \bestkarms{} instance $\inst$, and thus $\arm$ is the only arm in both $\instemb$ and the set $S$ returned by $\alg$. Therefore, $\algemb$ correctly solves the \bestarm{} instance $\instemb$ with probability at least $1-\delta$. Moreover, the expected number of samples drawn from arm $\arm$ is less than $N$ by our assumptions.
\end{proof}

\subsection{Proof of Lemma \ref{lem:genlb}}
\textbf{Lemma~\ref{lem:genlb}}
(restated)
{\em	
Let $\inst$ be an instance of \bestarm{} consisting of one arm with mean $\mu$ and $n$ arms with means on $[\mu - \Delta,\mu)$. There exist universal constants $\delta$ and $c$ (independent of $n$ and $\Delta$) such that for all algorithm $\alg$ that correctly solves $\inst$ with probability $1-\delta$, the expected number of samples drawn from the optimal arm is at least $c\Delta^{-2}\ln n$.
}
\begin{proof}[Proof of Lemma \ref{lem:genlb}]
Let $\delta_0$ and $c_0$ be the constants in Lemma \ref{lem:symlb}. We claim that Lemma \ref{lem:genlb} holds for constants $\delta = \delta_0/3$ and $c = c_0\delta_0/30$.

Suppose for a contradiction that when algorithm $\alg$ runs on \bestarm{} instance $\inst$, it outputs the correct answer with probability $1-\delta$ and the optimal arm $\arm_0$ is sampled less than $c\Delta^{-2}\ln n$ times in expectation.

\textbf{Overview.} Our proof follows the following five steps.

Step 1. We apply Instance Embedding to obtain a smaller yet denser (in the sense that all suboptimal arms have almost identical means) instance $\instden$, together with a new algorithm $\algnew$ that solves $\instden$ by taking few samples on the optimal arm with high probability.

Step 2. We obtain a symmetric instance $\instsym$ from $\instden$ by making the suboptimal arms identical to each other. We also define an algorithm $\algsym$ for instance $\instsym$.

Step 3. To analyze algorithm $\algsym$ on instance $\instsym$, we define the notion of ``mixed arms'', which return a fixed number of samples from one distribution, and then switch to another distribution permanently. We transform $\instden$ into an intance $\instmix$ with mixed arms.

Step 4. We show by Change of Distribution that when $\algnew$ runs on $\instmix$, it also returns the correct answer with few samples on the optimal arm.

Step 5. We show that the execution of $\algsym$ on $\instsym$ is, in a sense, equivalent to the execution of $\algnew$ on $\instmix$. This finally leads to a contradiction to Lemma \ref{lem:symlb}.

The reductions involved in the proof is illustrated in Figure 1.

\begin{figure*}\label{fig:reduction}
	\centering
	\begin{tikzpicture}[->,>=stealth',shorten >=1pt,auto,
	semithick,scale = 2.0]
	\tikzstyle{every state}=[text=black, rectangle]
	\tikzstyle{solve}=[thick]
	\node [state] (1) at (0, 0) {$\alg$ on $\inst$};
	\node [state] (2) at (2.3, 0) {$\algnew$ on $\instden$};
	\node [state] (3) at (5, 0) {\begin{tabular}{c}$\algnew$ on $\instmix$\\($\exmix$)\end{tabular}};
	\node [state] (4) at (7.5, 0) {\begin{tabular}{c}$\algsym$ on $\instsym$\\($\exsym$)\end{tabular}};

	\path	(1)	edge[solve, above]	node	{Step 1}						(2)
			(1)	edge[solve, below]	node	{{\begin{tabular}{c}Instance\\Embedding\end{tabular}}}		(2)
	      	(2)	edge[solve, above]	node	{Step 4}						(3)
	      	(2)	edge[solve, below]	node	{\begin{tabular}{c}Change of\\Distribution\end{tabular}}	(3)
	      	(3)	edge[solve, above]	node	{Step 5}						(4)
	      	(3)	edge[solve, below]	node	{Equivalence}					(4);
	\end{tikzpicture}
	
	\caption{Each rectangle denotes the execution of an algorithm on an instance. The arrows specify the step in which each reduction is performed and the major technique involved in the reduction.}
\end{figure*}
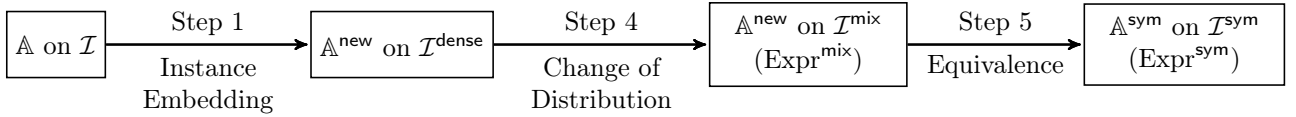

\textbf{Step 1: Construct $\instden$ and $\algnew$.} We first construct a new \bestarm{} instance $\instden$ in which the sub-optimal arms have almost identical means. Let $\mu_0$ denote the mean of the optimal arm $\arm_0$. We divide the interval $[\mu_0 - \Delta, \mu_0]$ into $n^{0.9}$ segments, each with length $\Delta/n^{0.9}$. Set $m = n^{0.1}$. By the pigeonhole principle, we can assume that $\arm_1,\arm_2,\ldots,\arm_m$ are $m$ arms with means in the same interval. Let $\mu_i$ denote the mean of arm $\arm_i$. By construction, $\mu_0 - \mu_i\le\Delta$ for all $1\le i\le m$ and $\left|\mu_i - \mu_j\right|\le\Delta/n^{0.9}$ for all $1\le i,j\le m$.

We simply let $\instden = \{\arm_0,\arm_1,\arm_2,\ldots,\arm_{m}\}$. By Instance Embedding (Lemma \ref{lem:InstEmb}), there exists an algorithm $\algnew$ that solves $\instden$ with probability $1-\delta$ while taking less than $c\Delta^{-2}\ln n$ samples on $\arm_0$ in expectation. We will focus on instance $\instden$ in the rest of our proof.

Recall that $\Pr_{\alg,\inst}$ and $\Ex_{\alg,\inst}$ denote the probability and expectation when algorithm $\alg$ runs on instance $\inst$ respectively.
Let $\tau_i$ denote the number of samples taken on $A_i$. Then we have
$$\Ex_{\algnew, \instden}[\tau_0] \le c\Delta^{-2}\ln n\text{.}$$
Let $N = c\delta^{-1}\Delta^{-2}\ln n$.
By Markov's inequality,
$$\Pr_{\algnew, \instden}[\tau_0 \ge N]\le\frac{c\Delta^{-2}\ln n}{N}=\delta\text{.}$$
Let $\event$ denote the event that the algorithm returns the correct answer while taking at most $N$ samples on arm $\arm_0$. The union bound implies that
$$\Pr_{\algnew, \instden}[\event] \ge 1-2\delta\text{.}$$


\textbf{Step 2: Construct $\instsym$ and $\algsym$.} Let $\instsym$ be the \bestarm{} instance consisting of arm $\arm_0$ and $m = n^{0.1}$ copies of arm $\arm_1$. We define algorithm $\algsym$ as follows. Given instance $\instsym$, $\algsym$ simulates algorithm $\algnew$ as if $\algnew$ is running on instance $\instden$.\eat{ on instance $\instden$}
When $\algnew$ requires a sample from an arm $\arm$ that has not been pulled $N$ times (recall that $N = c\delta^{-1}\Delta^{-2}\ln n$), $\algsym$ draws a sample from $\arm$ and sends it to $\algnew$. When the number of pulls on $\arm$ exceeds $N$ for the first time, $\algsym$ assigns a random number $\pi(\arm)$ from $\{1,2,\ldots,m\}$ to arm $\arm$, such that $\pi(\arm)$ is different from every number that has already been assigned to another arm. If this step cannot be performed because all numbers in $\{1,2,\ldots,m\}$ have been used up, $\algsym$ simply terminates without returning an answer.\footnote{As shown in the analysis in Step 5, we only care the behavior of $\algsym$ when the labels are not used up.} After that, upon each pull of $\arm$, $\algsym$ sends a sample drawn from $\normal(\mu_{\pi(A)},1)$ to $\algnew$. (Recall that $\mu_i$ denotes the mean of arm $\arm_i$ in $\instden$.) Finally, $\algsym$ outputs what $\algnew$ outputs.

\eat{
\lijie{it seems it did not consider the case where the labels in $\{1, \dotsc,m \}$ are used up? I know that case does not affect anything though, but still need to explain a word}
\mingda{Fixed.}
}

\textbf{Step 3: Construct mixed arms and $\instmix$.} In order to analyze the execution of $\algsym$ on instance $\instsym$, it is helpful to define $m$ ``mixed arms''. For $1\le i\le m$, the $i$-th mixed arm, denoted by $\marm_i$, returns a sample drawn from $\normal(\mu_1, 1)$ (i.e., the reward distribution of arm $\arm_1$) when it is pulled for the first $N$ times. After $N$ pulls, $\marm_i$ returns samples from $\normal(\mu_i, 1)$ as $\arm_i$ does. For ease of notation, we also let $\marm_0$ denote $\arm_0$. Let $\instmix$ denote the \bestarm{} instance $\{\marm_0,\marm_1,\marm_2,\ldots,\marm_m\}$. 

\textbf{Step 4: Run $\algnew$ on $\instmix$.} Now suppose we run $\algnew$ on instance $\instmix$. In fact, we may view each arm (either $\arm_i$ or $\marm_i$) as two separate ``semi-arms''. When $\algnew$ samples arm $\arm_i$ in the first $N$ times, it pulls the first semi-arm of $\arm_i$. After $\arm_i$ has been pulled $N$ times, $\algnew$ pulls the second semi-arm. From this perspective, $\instmix$ is simply obtained from $\instden$ by changing the first semi-arm of each arm $\arm_i$ ($1\le i\le m$) from $\normal(\mu_i,1)$ to $\normal(\mu_1,1)$. Since the first semi-arm is sampled at most $N$ times by $\algnew$, it follows from Change of Distribution (Lemma \ref{lem:CoD}) that
\begin{equation*}\begin{split}
&d\left(\Pr_{\algnew, \instden}[\event],\Pr_{\algnew, \instmix}[\event]\right)\\
\le & \sum_{i=1}^{m} N\cdot \KL\left(\normal(\mu_i,1),\normal(\mu_1,1)\right)\\
=& \frac{N}{2}\sum_{i=1}^{m}(\mu_i-\mu_1)^2\\
\le & \frac{c\delta^{-1}\Delta^{-2}\ln n}{2}\cdot n^{0.1}\cdot(\Delta/n^{0.9})^2\le \frac{c}{2\delta}n^{-1.7}\ln n\text{.}
\end{split}\end{equation*}
Here the second step follows from
$$\KL(\normal(\mu_1,1),\normal(\mu_2,1)) = (\mu_1-\mu_2)^2/2\text{.}$$
The third step is due to $N = c\delta^{-1}\Delta^{-2}\ln n$, $m = n^{0.1}$, and $|\mu_1 - \mu_i|\le \Delta/n^{0.9}$.

For sufficiently large $n$, we have
$$\frac{c}{2\delta}n^{-1.7}\ln n < d(1-2\delta, 1-3\delta)\text{.}$$
Recall that $\Pr_{\algnew,\instden}[\event]\ge 1-2\delta$. It follows from the monotonicity of $d(\cdot,\cdot)$ (Fact \ref{fact:d}) that $$\Pr_{\algnew, \instmix}[\event]\ge 1-3\delta\text{.}$$

\eat{
\lijie{you use a property of $d(\cdot,\cdot)$ here, better to remind the reader that...}
\mingda{Fixed.}
}

\textbf{Step 5: Analyze $\algsym$ and derive a contradiction to Lemma \ref{lem:symlb}.} For clarity, let $\exmix$ denote the experiment that $\algnew$ runs on $\instmix$, and $\exsym$ denote the experiment that $\algsym$ runs on $\instsym$. Step 4 implies that event $\event$ happens with probability at least $1 - 3\delta$ in experiment $\exmix$.

In the following, we derive the likelihood of an arbitrary execution of $\exmix$ in which event $\event$ happens, and prove that this execution has the same likelihood in experiment $\exsym$. As a result, $\algsym$ also returns the correct answer with probability at least $1-3\delta$. Moreover, according to our construction, $\algsym$ always takes at most $N$ samples on arm $\arm_0$. On the other hand, since $\mu_0 - \mu_1\le\Delta$, Lemma \ref{lem:symlb} implies that no algorithm can solve $\algsym$ correctly with probability $1 - \delta_0 = 1-3\delta$ while taking less than
$$c_0\Delta^{-2}\ln m = 30c\delta_0^{-1}\cdot\Delta^{-2}\cdot(0.1\ln n) = N$$
samples on $\arm_0$ in expectation. This leads to a contradiction and finishes the proof.

\textbf{Technicalities: equivalence between $\exmix$ and $\exsym$.} For ease of notation, we assume in the following that algorithm $\algnew$ is deterministic.\footnote{In fact this assumption is without loss of generality: the argument still holds conditioning on the randomness of $\algnew$.} Then the only randomness in experiment $\exmix$ stems from the random permutation of arms at the beginning, and the samples drawn from the arms.

We consider an arbitrary run of experiment $\exmix$ in which event $\event$ happens (i.e., $\algnew$ returns the optimal arm before taking more than $N$ samples from it).  For $0\le i\le m$, let $\sigma(i)$ denote the index of the $i$-th arm received by algorithm $\algnew$. (i.e., the $i$-th arm received by $\algnew$ is $\marm_{\sigma(i)}$.) By definition, $\sigma$ is a uniformly random permutation of $\{0,1,\ldots,m\}$. Let $\obs_i$ denote the sequence of samples that $\algnew$ observes from the $i$-th arm. Then the likelihood of this execution is given by
\begin{equation}\label{eq:likeli1}
\frac{1}{(m+1)!}\sum_{\sigma}\prod_{i=0}^{m}\pdf{\marm_{\sigma(i)}}{\obs_i}\text{.}
\end{equation}
Here we sum over all random permutations $\sigma$ on $\{0,1,2,\ldots,m\}$, and $\pdf{\marm_{\sigma(i)}}{\obs_i}$ denote the probability density of observing $\obs_i$ on arm $\marm_{\sigma(i)}$.

Now we compute the likelihood that in experiment $\exsym$, the algorithm $\algnew$ simulated by $\algsym$ observes the same sequence of samples. Let $\lambda$ denote the random permutation of arms given to $\algsym$. We define
$$\pos = \lambda^{-1}(0)\text{,}$$
$$\Long = \{i\in\{0,1,2,\ldots,m\}: |\obs_i| > N\}\text{, and}$$
$$\Short = \{0,1,\ldots,m\}\setminus\left(\Long\cup\{\pos\}\right)\text{.}$$
In other words, $\pos$ is the position of the optimal arm $\arm_0$ in $\instsym$. $\Long$ denote the positions of suboptimal arms that have been sampled more than $N$ times, while $\Short$ denote the remaining suboptimal arms. Note that since less than $N$ samples are taken on the optimal arm, $\pos$ is excluded from both sets.

Another source of randomness in $\exsym$ is the random numbers $\pi(\cdot)$ that $\algsym$ assigns to different arms. In this specific execution, function $\pi(\cdot)$ chosen by $\algsym$ is a random injection from $\Long$ to $\{1,2,\ldots,m\}$. By our construction of $\algsym$, for each $i\in\Long$, the algorithm $\algnew$ simulated by $\algsym$ first observes $N$ samples drawn from $\normal(\mu_1, 1)$ (i.e., the reward distribution of arm $\arm_1$) on the $i$-th arm. After that, $\algnew$ starts to observe samples drawn from $\normal(\mu_{\pi(i)},1)$. Recall that the mixed arm $\marm_{\pi(i)}$ also returns samples in this pattern. Therefore, the likelihood of observations on the $i$-th arm is exactly
\begin{equation}\label{eq:pdf}
\pdf{\marm_{\pi(i)}}{\obs_i}\text{.}
\end{equation}

In fact, we may express the likelihood for all arms as in \eqref{eq:pdf} by extending $\pi$ into a permutation on $\{0,1,2,\ldots,m\}$. First, we set $\pi(\pos) = 0$. Recall that the optimal arm is sampled less than $N$ times, all the samples observed from it are drawn from $\normal(\mu_0, 1)$, which is exactly the reward distribution of $\marm_0 = \marm_{\pi(\pos)}$. Therefore the likelihood of observations $\obs_{p^*}$ is given by
$$\pdf{\marm_{\pi(\pos)}}{\obs_{p^*}}\text{.}$$

Second, we let $\range = \{1,2,\ldots,m\}\setminus\pi(\Long)$ denote the available labels among $\{1,2,\ldots,m\}$. We define $\pi$ on $\Short$ by matching $\Short$ with $\range$ uniformly at random. Note that since all arms in $\Short$ are sampled at most $N$ times, $\algnew$ simulated by $\algsym$ always observes samples drawn from $\normal(\mu_1,1)$, which agrees with the first $N$ samples from every mixed arm $\marm_{i}$ ($i\ne 0$). Therefore, the likelihood of observations on the $i$-th arm where $i\in\Short$ is also given by
$$\pdf{\marm_{\pi(i)}}{\obs_i}\text{.}$$

According to our analysis above, the samples from the $i$-th arm observed by the simulated $\algnew$ in experiment $\exsym$ follows the same distribution as samples drawn from $\marm_{\pi(i)}$. Moreover, $\pi$ is a uniformly random permutation with the only condition that $\pi(\pos) = 0$, which is equivalent to $\pi^{-1}(0) = \pos = \lambda^{-1}(0)$.
Therefore, the likelihood is given by
\begin{equation}\label{eq:likeli2}
\frac{1}{m!\cdot(m+1)!}\sum_{\pi^{-1}(0) = \lambda^{-1}(0)}\prod_{i=0}^{m}\pdf{\marm_{\pi(i)}}{\obs_i}\text{.}
\end{equation}

Note that conditioning on $\lambda^{-1}(0) = \pi^{-1}(0)$, $\pi$ is still a uniformly random permutation on $\{0,1,2,\ldots,m\}$. Therefore the two likelihoods in \eqref{eq:likeli1} and \eqref{eq:likeli2} are equal. This finishes the proof of the equivalence.
\end{proof}

\section{Missing Proofs in Section 4}
\subsection{Building Blocks}
\subsubsection{PAC algorithm for \bestkarms{}}
On an instance of \bestkarms{} with $n$ arms, the \PACSP{} algorithm in \cite{chen2016pure} is guaranteed to return a $\eps$-optimal answer of \bestkarms{} with probability $1 - \delta$, using
$$O(n\eps^{-2}(\ln\delta^{-1} + \ln k))$$
samples. Here a subset of $k$ arms $T\subseteq\inst$ is called $\eps$-optimal, if after adding $\eps$ to the mean of each arm in $T$, $T$ becomes the best $k$ arms in $\inst$.

We implement our $\PAC(S,k,\eps,\delta)$ subroutine as follows. Recall that $\PAC$ is expected to return a partition $(\sbig,\ssma)$ of the arm set $S$. If $k \le |S|/2$, we directly run \PACSP{} on the \bestkarms{} instance $S$ and return its output as $\sbig$. We let $\ssma = S\setminus\sbig$. Otherwise, we negate the mean of all arms in $S$ and run \PACSP{} to find the top $|S| - k$ arms in the negated instance.\footnote{More precisely, when the algorithm requires a sample from an arm, we draw a sample and return the opposite.} Finally, we return the output of \PACSP{} as $\ssma$ and let $\sbig = S\setminus\ssma$. In the following we prove Lemma \ref{fact:PAC}.

\begin{proof}[Proof of Lemma \ref{fact:PAC}]
By construction, the algorithm $\PAC(S,k,\eps,\delta)$ takes $$O(|S|\eps^{-2}[\ln\delta^{-1} + \ln\min(k,|S|-k)])$$ samples. In the following we prove that if $k\le |S| / 2$, the set $T$ returned by \PACSP{} is $\eps$-optimal with probability $1 - \delta$. The case $k > |S|/2$ can be proved by an analogous argument.

Let $S'$ denote the instance in which the mean of every arm in $T$ is increased by $\eps$. By definition of $\eps$-optimality, $T$ contains the best $k$ arms in $S'$. Note that the $k$-th largest mean is $S'$ is at least $\Mean{k}$. Thus for each arm $\arm\in T$, $\mu_{\arm}$ must be at least $\Mean{k} - \eps$, since otherwise even after $\mu_{\arm}$ increases by $\eps$, $\arm$ is still not among the best $k$ arms.

It also holds that every arm in $S\setminus T$ must have a mean smaller than or equal to $\Mean{k+1} + \eps$. Suppose for a contradiction that $\arm\in S\setminus T$ has a mean $\mu_{\arm} > \Mean{k+1} + \eps$. Then every arm with mean less than or equal to $\Mean{k+1}$ in $S$ still have a mean smaller than $\mu_{\arm}$ in $S'$. This implies that $\arm$ is among the best $k$ arms in $S'$, which contradicts our assumption that $\arm\notin T$.
\end{proof}

\subsubsection{PAC algorithms for \bestarm{}}
By symmetry, it suffices to implement the subroutine $\MEbig$ and prove its property. In order to estimate the mean of the largest arm in $S$, we first call $\PAC(S, 1, \eps/2, \delta/2)$ to find an approximately largest arm. Then we sample the arm $2\eps^{-2}\ln(4/\delta)$ times, and finally return its empirical mean. We prove Lemma \ref{fact:ME} as follows.

\begin{proof}[Proof of Lemma \ref{fact:ME}]
Let $\arm^*$ denote the largest arm in $S$, and let $\arm_0$ denote the arm returned by $\PAC(S,1,\eps/2,\delta/2)$. According to Lemma \ref{fact:PAC}, with probability $1 - \delta/2$, $\mu_{\arm_0}\in[\mu_{\arm^*} - \eps/2,\mu_{\arm^*}]$.
It follows that, with probability $1 - \delta/2$,
$$\left|\mu_{\arm_0} - \max_{\arm\in S}\mu_{\arm}\right|\le\eps/2\text{.}$$
Let $\hat\mu$ denote the empirical mean of arm $\arm_0$. By a Chernoff bound, with probability $1 - \delta/2$,
$$\left|\hat\mu - \mu_{\arm_0}\right|\le\eps/2\text{.}$$
It follows from a union bound that with probability $1 - \delta$,
$$\left|\hat\mu - \max_{\arm\in S}\mu_{\arm}\right|\le\eps\text{.}$$

Finally, we note that $\PAC$ consumes $O(|S|\eps^{-2}\ln\delta^{-1})$ samples as $k=1$, while sampling $\arm_0$ takes $O(\eps^{-2}\ln\delta^{-1})$ samples. This finishes the proof.
\end{proof}

\subsubsection{Elimination procedures}
We use the \textsf{Elimination} procedure defined in \cite{chen2015optimal} as our subroutine $\ELsma(S,\thetasma,\thetabig,\delta)$. The other building block $\ELbig(S,\thetasma,\thetabig,\delta)$ can be implemented either using a procedure symmetric to \textsf{Elimination}, or simply by running $\ELsma(S', -\thetabig, -\thetasma, \delta)$, where $S'$ is obtained from $S$ by negating the arms. In the following, we prove Lemma \ref{fact:EL}.

\begin{proof}[Proof of Lemma \ref{fact:EL}]
Let $T$ denote the set of arms returned by $\ELsma(S,\thetasma,\thetabig,\delta)$. Lemma B.4 in \cite{chen2015optimal} guarantees that with probability $1 - \delta$, the following three properties are satisfied:
(1) $\ELsma$ takes $O(|S|\eps^{-2}\ln\delta^{-1})$ samples, where $\eps = \thetabig - \thetasma$; (2) $$\left|\{\arm\in T:\mu_{\arm}\le\thetasma\}\right|\le|T|/10\text{;}$$
(3) Let $\arm^*$ be the largest arm in $S$. If $\mu_{\arm^*}\ge\thetabig$, then $\arm^*\in T$.

In fact, the proof of Lemma B.4 does not rely on the fact that $\arm^*$ is the largest arm in $S$. Thus property (3) holds for any fixed arm in $S$. This proves the properties of $\ELsma$. The properties of $\ELbig$ hold due to the symmetry.
\end{proof}

\subsection{Observations}
\subsubsection{Proof of Observation \ref{obs:theta}}
\begin{proof}[Proof of Observation \ref{obs:theta}]
Let $\arm$ denote the arm with the largest mean in $\ssma_r$. Recall that $\musma_r$ denote the mean of the $(\kbig_r+1)$-th largest arm in $S_r$. The correctness of $\PAC$ and Lemma \ref{fact:PAC} guarantee that $\mu_{\arm}\le\musma_r + \eps_r/8$. Note that $\musma_r$ is the $\ksma_r$-th smallest mean in $S_r$, while $\mu_{\arm}$ is the largest mean among the $\ksma_r$ arms in $\ssma_r\subseteq S_r$. So it also holds that $\mu_{\arm}\ge\musma_r$.
Thus we have
$$\mu_{\arm}\in[\musma_r,\musma_r+\eps_r/8]\text{.}$$
Moreover, as $\MEbig$ returns correctly conditioning on $\goodevent_r$, by Lemma \ref{fact:ME} we have
$$\thetabig_r\in[\musma_r - \eps_r/8,\musma_r + \eps_r/4]\text{.}$$
The second property follows from a symmetric argument.
\end{proof}

\subsubsection{Proof of Observation \ref{obs:kbound}}
\begin{proof}[Proof of Observation \ref{obs:kbound}]
Recall that $\validevent$ denotes the event that the execution of \algo{} is valid. We condition on $\validevent$ in the following proof. In particular, conditioning on $\validevent$, $\goodevent_{r-1}$ happens and $T_{r-1}$ along with the best $\kbig_{r-1}$ arms in $S_{r-1}$ constitute the correct answer of the original instance.

Let $\mubig_{r-1}$ and $\musma_{r-1}$ be the $\kbig_{r-1}$-th and the $(\kbig_{r-1}+1)$-th largest mean in $S_{r-1}$. As the arm with mean $\mubig_{r-1}$ is among the correct answer, we have $\mubig_{r-1} \ge \Mean{k}$, where $\Mean{k}$ is the $k$-th largest mean in the original instance. We also have $\musma_{r-1} \le \Mean{k+1}$ for the same reason.

Since $\goodevent_{r-1}$ happens, by Observation \ref{obs:theta} we have
$$\thetabig_{r-1}\le\musma_{r-1} + \eps_{r-1}/4\le \Mean{k+1} + \eps_{r-1} / 4\text{.}$$
Then the larger threshold used in $\ELbig$ is upper bounded by
$$\thetabig_{r-1} + \eps_{r-1} / 4 \le \Mean{k+1} + \eps_{r-1} / 2 = \Mean{k+1} + \eps_r\text{.}$$

Let $T$ denote the set of arms returned $\ELbig$ in round $r-1$. We partition $T$ into the following three parts:
$$T^{(1)} = \left\{\arm\in T:\mu_{\arm}> \Mean{k+1} + \eps_r\right\}\text{,}$$
$$T^{(2)} = \left\{\arm\in T:\Mean{k}\le \mu_{\arm}\le \Mean{k+1} + \eps_r\right\}\text{,}$$
$$T^{(3)} = \left\{\arm\in T:\mu_{\arm}\le \Mean{k+1}\right\}\text{.}$$

By Lemma \ref{fact:EL} and the correctness of $\EL$ conditioning on $\goodevent_{r-1}$, we have
$$|T^{(1)}|\le |T|/10\text{.}$$
It follows that $$|T^{(2)}| + |T^{(3)}| \ge 9|T|/10\ge |T|/2\text{.}$$

By definition of arm groups, every arm in $T^{(2)}$ is in $\gbig_{\ge r}$. In order to bound $T^{(3)}$, we say that an arm is misclassified into $\sbig_{r-1}$, if the arm is not among the best $\kbig_{r-1}$ arms in $S_{r-1}$, but is included in $\sbig_{r-1}$. We may define misclassification into $\ssma_{r-1}$ similarly. As $|\sbig_{r-1}| = \kbig_{r-1}$, the numbers of arms misclassified into both sides are the same.

Since the arms in $T^{(3)}$ are misclassified into $\sbig_{r-1}$, there are at least $|T^{(3)}|$ other arms misclassified into $\ssma_{r-1}$. Lemma \ref{fact:PAC} (along with the correctness of $\PAC$) guarantees that all arms misclassified into $\ssma_{r-1}$ have means smaller than or equal to $\Mean{k+1} + \eps_{r-1}/8$. Thus by definition of arm groups, all these $|T^{(3)}|$ arms are also in $\gbig_{\ge r}$. Therefore, we have
$$|\gbig_{\ge r}|\ge|T^{(2)}| + |T^{(3)}|\ge |T|/2\text{.}$$

Note that $|T| = \kbig_r$. Therefore we conclude that $\kbig_r\le 2|\gbig_{\ge r}|$. The bound on $\ksma_r$ can be proved using a symmetric argument.
\end{proof}

\subsection{Proof of Lemma \ref{lem:valid}}
\textbf{Lemma~\ref{lem:valid}}
(restated)
{\em	
$\Pr\left[\validevent\right]\ge 1 - \delta$.
}
\begin{proof}[Proof of Lemma \ref{lem:valid}]
We prove the lemma by upper bounding the probability of $\overline{\validevent}$, the complement of $\validevent$.

\textbf{Split $\overline{\validevent}$.} Let $\badevent_r$ denote the event that \algo{} is valid at round $r$, yet it becomes invalid at round $r+1$. Then we have
$$\Pr\left[\overline{\validevent}\right] = \sum_{r = 1}^{\infty}\Pr\left[\badevent_r\right]\text{.}$$

By definition of validity, event $\badevent_r$ happens in one of the following two cases:
\begin{itemize}
\item Case 1: $\goodevent_r$ does not happen.
\item Case 2: $\goodevent_r$ happens, yet $T_{r+1}$ together with the best $\kbig_{r+1}$ arms in $S_{r+1}$ is no longer the correct answer.
\end{itemize}

The probability of Case 1 is upper bounded by $5\delta_r$ according to Observation \ref{obs:event}. We focus on bounding the probability of Case 2 in the following.

\textbf{Misclassified arms.} Recall that $\mubig_r$ and $\musma_r$ denote the means of the $\kbig_r$-th and the $(\kbig_r+1)$-th largest arms in $S_r$ respectively. Conditioning on the validity of the execution at round $r$, the arm with mean $\mubig_r$ is among the best $k$ arms in the original instance, while the arm with mean $\musma_r$ is not. Thus we have
$$\mubig_r\ge\Mean{k}>\Mean{k+1}\ge\musma_r\text{.}$$
Define
$$\ubig_r = \{\arm\in\sbig_r:\mu_{\arm}\le\musma_r\}$$
and
$$\usma_r = \{\arm\in\ssma_r:\mu_{\arm}\ge\mubig_r\}\text{.}$$
In other words, $\ubig_r$ and $\usma_r$ denote the set of arms ``misclassified'' by the $\PAC$ subroutine into $\sbig_r$ and $\ssma_r$ in round $r$.

\textbf{Bound the number of misclassified arms.} Note that since $|\ubig_r|\le|\sbig_r|=\kbig_r$, and in addition, less than $\ksma_r$ arms in $S_r$ have means smaller than or equal to $\musma_r$,
$$|\ubig_r|\le\min(\kbig_r, \ksma_r)\text{.}$$
For the same reason, it holds that
$$|\usma_r|\le\min(\kbig_r, \ksma_r)\text{.}$$

\textbf{With high probability, no misclassified arms are removed.} By Observation \ref{obs:theta}, conditioning on $\goodevent_r$, we have
$$\thetabig_r \ge \musma_r - \eps_r/8\text{.}$$
Therefore, when $\ELbig$ in Line \ref{line:EL} is called at round $r$, the smaller threshold is at least
$$\thetabig_r + \eps_r/8 \ge \musma_r\text{,}$$
which is larger than the mean of every arm in $\ubig_r$. By Lemma \ref{fact:EL} and a union bound, with probability
$$1 - |\ubig_r|\deltap_r \ge 1 - \min(\kbig_r,\ksma_r)\deltap_r = 1 - \delta_r\text{,}$$
no arms in $\ubig$ are removed by $\ELbig$.
For the same reason, with probability $1 - \delta_r$, no arms in $\usma$ are removed by $\ELsma$.

\textbf{Bound the probability of Case 2.} Thus, with probability at least $1 - 2\delta_r$ conditioning on $\goodevent_r$, $\ELbig$ only removes arms with means larger than or equal to $\mubig_r$, and $\ELsma$ only removes arms with means smaller than or equal to $\musma_r$. Consequently, every arm in $S_r$ with mean greater than or equal to $\mubig_r$ either moves to $T_{r+1}$ or stays in $S_{r+1}$, which implies that Case 2 does not happen.

Therefore, the Case 2 happens with probability at most $2\delta_r$, and it follows that
$$\Pr\left[\badevent_r\right]\le 5\delta_r + 2\delta_r = 7\delta_r\text{.}$$
Finally, we have
$$\Pr\left[\overline{\validevent}\right]\le \sum_{r = 1}^{\infty}7\delta_r\le \sum_{r=1}^{\infty}\frac{7\delta}{20r^2}\ge \delta\text{.}$$
\end{proof}

\subsection{Missing Calculation in the Proof of Lemma \ref{lem:sample}}
\textbf{Lemma~\ref{lem:sample}}
(restated)
{\em	
Conditioning on event $\validevent$, \algo{} takes $O(H\ln\delta^{-1} + \htilbig + \htilsma + \htil)$ samples.
}
\begin{proof}[Proof (continued)]
Recall that
$$\Hone_r = \left(|\gbig_{\ge r}|+|\gsma_{\ge r}|\right)\eps_r^{-2}(\ln\delta^{-1} + \ln r)\text{,}$$
$$\Htwobig_r = \eps_r^{-2}|\gbig_{\ge r}|\ln|\gsma_{\ge r}|\text{,}$$
$$\Htwosma_r = \eps_r^{-2}|\gsma_{\ge r}|\ln|\gbig_{\ge r}|\text{.}$$
Our goal is to show that
$$\sum_{r=1}^{\infty}\Hone_r = O\left(H\ln\delta^{-1} + \htil\right)\text{,}$$
$$\sum_{r=1}^{\infty}\Htwobig_r = O\left(\htilbig\right)\text{, and}$$
$$\sum_{r=1}^{\infty}\Htwosma_r = O\left(\htilsma\right)\text{.}$$

\textbf{Upper bound the $\Hone$ term:} It follows from a directly calculation that
\begin{equation*}\begin{split}
\sum_{r=1}^{\infty}\Hone_r
=&\sum_{r=1}^{\infty}\sum_{i = r}^{\infty}\left(|\gbig_{i}|+|\gsma_{i}|\right)\eps_r^{-2}(\ln\delta^{-1} + \ln r)\\
=&\sum_{i=1}^{\infty}\left(|\gbig_{i}|+|\gsma_{i}|\right)\sum_{r=1}^{i}\eps_r^{-2}(\ln\delta^{-1} + \ln r)\\
=&O\left(\sum_{i=1}^{\infty}\left(|\gbig_{i}|+|\gsma_{i}|\right)\eps_i^{-2}(\ln\delta^{-1} + \ln i)\right)\\
=&O\left(\sum_{i=1}^{n}\Gap{i}^{-2}\left(\ln\delta^{-1} + \ln\ln\Gap{i}^{-1}\right)\right)\text{.}
\end{split}\end{equation*}
Here the second step interchanges the order of summation. The third step holds since the inner summation is always dominated by the last term. Finally, the last step is due to the fact that $\Delta_{\arm} = \Theta(\eps_i)$ for every arm $\arm\in\gbig_i\cup\gsma_i$.
Therefore we have
$$\sum_{r=1}^{\infty}\Hone_r = O(H\ln\delta^{-1} + \htil)\text{.}$$

\textbf{Upper bound $\Htwobig$ and $\Htwosma$:}
By definition of $\Htwobig_r$, we have
\begin{equation*}\begin{split}
\sum_{r=1}^{\infty}\Htwobig_r
=&\sum_{r=1}^{\infty}\sum_{i = r}^{\infty}\eps_r^{-2}|\gbig_{i}|\ln|\gsma_{\ge r}|\\
=&\sum_{i=1}^{\infty}|\gbig_{i}|\sum_{r=1}^{i}\eps_r^{-2}\ln|\gsma_{\ge r}|\text{.}
\end{split}\end{equation*}

Therefore we conclude that $$\sum_{r=1}^{\infty}\Htwobig_r = O(\htilbig)\text{.}$$ The bound on the sum of $\Htwosma_r$ follows from an analogous calculation.
\end{proof}

\subsection{Proof of Theorem \ref{theo:ubrefine}}
\textbf{Theorem~\ref{theo:ubrefine}}
(restated)
{\em	
For every \bestkarms{} instance,
 the following statements hold:
\begin{enumerate}
\item
$\htilbig + \htilsma = O\left(\left(\hbig+\hsma\right)\ln\ln n\right)\text{.}$
\item
$\htilbig + \htilsma = O\left(H\ln k\right)\text{.}$
\end{enumerate}
}
\begin{proof}[Proof of Theorem \ref{theo:ubrefine}]
\textbf{First Upper Bound.} Recall that $$\hbig = \sum_{i=1}^{\infty}\left|\gbig_i\right|\cdot\max_{j\le i}\eps_j^{-2}\ln\left|\gsma_{\ge j}\right|\text{, and}$$
$$\htilbig = \sum_{i=1}^{\infty}\left|\gbig_i\right|\sum_{j=1}^{i}\eps_j^{-2}\ln\left|\gsma_{\ge j}\right|\text{.}$$

For brevity, let $N_r$ denote $\eps_r^{-2}\ln|\gsma_{\ge r}| = 4^r\ln|\gsma_{\ge r}|$. We fix the value $i$. Then the $i$-th term in $\htilbig$ reduces to $\left|\gbig_i\right|\sum_{r=1}^{i}N_r\text{.}$ Let $r^* = \argmax_{1\le r\le i}N_r$. Thus the $i$-th term in $\hbig$ is simply $\left|\gbig_{i}\right|N_{r^*}$, which is in general smaller than $\left|\gbig_i\right|\sum_{r=1}^{i}N_r$. However, we will show that the ratio between the two terms is bounded by $O(\ln\ln n)$.

By definition of $r^*$, we have $N_{r^*}\ge N_i$. Substituting $N_{r^*}$ and $N_i$ yields
$$4^{r^*}\ln\left|\gsma_{\ge r^*}\right|\ge 4^{i}\ln\left|\gsma_{\ge i}\right|\text{.}$$
It follows that
$$4^{i-r^*}\ln\left|\gsma_{\ge i}\right|\le\ln\left|\gsma_{\ge r^*}\right|\le\ln n\text{,}$$
and thus $i - r^* = O(\ln\ln n)$.

Let $1\le r_1\le r^*$ be the smallest integer such that $N_{r_1}\ge 2^{r_1 - r^*}N_{r^*}$. By substituting $N_{r_1}$ and $N_{r^*}$, we obtain
$$4^{r_1}\ln\left|\gsma_{\ge r_1}\right|\ge 2^{r_1 - r^*}\cdot4^{r^*}\ln\left|\gsma_{\ge r^*}\right|\text{,}$$
which further implies that
$$2^{r^*-r_1}\ln|\gsma_{\ge r^*}|\le \ln|\gsma_{\ge r_1}|\le\ln n$$
and thus $r^* - r_1 = O(\ln\ln n)$.

Therefore we have $i - r_1 = O(\ln\ln n)$, and we can bound the sum of $N_r$ as follows:
\begin{equation*}\begin{split}
\sum_{r=1}^{i}N_r
=&\sum_{r=1}^{r_1 - 1}N_r + \sum_{r=r_1}^{i}N_r\\
\le&N_{r^*}\sum_{r=1}^{r_1-1}2^{r-r^*} + (i-r_1+1)N_{r^*}\\
\le &(i-r_1+2)N_{r^*} = O(N_{r^*}\ln\ln n)\text{.}
\end{split}\end{equation*}
Here the second step follows from $N_{r} < 2^{r - r^*}N_{r^*}$ for $r < r_1$ (by definition of $r_1$) and $N_r \le N_{r^*}$ for $r\ge r_1$ (by definition of $r^*$).

It then follows from a direct summation over all $i$ that
$$\htilbig = O(\hbig\ln\ln n)\text{.}$$
The bound on $\htilsma$ can be proved similarly.

\textbf{Second Upper Bound.} Note that
\begin{equation}\begin{split}\label{eq:htilbig}
\htilbig &= \sum_{i=1}^{\infty}\left|\gbig_i\right|\sum_{j=1}^{i}\eps_j^{-2}\ln\left|\gsma_{\ge j}\right|\\
&= \sum_{j=1}^{\infty}\eps_j^{-2}\ln\left|\gsma_{\ge j}\right|\sum_{i=j}^{\infty}\left|\gbig_i\right|\\
&= \sum_{i=1}^{\infty}\eps_i^{-2}\left|\gbig_{\ge i}\right|\ln\left|\gsma_{\ge i}\right|\text{.}
\end{split}\end{equation}
Here the second step interchanges the order of summation. By symmetry we also have
\begin{equation}\label{eq:htilsma}
\htilsma = \sum_{i=1}^{\infty}\eps_i^{-2}\left|\gsma_{\ge i}\right|\ln\left|\gbig_{\ge i}\right|\text{.}
\end{equation}

It can be easily verified that for $1\le x\le y$, we have
\begin{equation}\label{eq:tech}
x\ln y + y\ln x \le (x+y)(2\ln x + 1)\text{.}
\end{equation}
Note that $\min\left(|\gbig_{\ge i}|,|\gsma_{\ge i}|\right) \le k$ for all $i$. Therefore we can bound $\htilbig + \htilsma$ as follows:
\begin{equation*}\begin{split}
&\htilbig + \htilsma\\
=& \sum_{i=1}^{\infty}\eps_i^{-2}\left(|\gbig_{\ge i}|\ln|\gsma_{\ge i}| + |\gsma_{\ge i}|\ln|\gbig_{\ge i}|\right)\\
=& O\left(\sum_{i=1}^{\infty}\eps_i^{-2}\left(|\gbig_{\ge i}| + |\gsma_{\ge i}|\right)\ln\min\left(|\gbig_{\ge i}|,|\gsma_{\ge i}|\right)\right)\\
=& O\left(\sum_{i=1}^{\infty}\eps_i^{-2}\left(|\gbig_{\ge i}| + |\gsma_{\ge i}|\right)\ln k\right)=O(H\ln k)\text{.}
\end{split}\end{equation*}
The first step follows from \eqref{eq:htilbig} and \eqref{eq:htilsma}. The second step is due to \eqref{eq:tech}. The third step is due to the observation that $\min\left(|\gbig_{\ge i}|,|\gsma_{\ge i}|\right) \le k$. Finally, the last step follows from a simple rearrangement of the summation:
\begin{equation*}\begin{split}
&\sum_{i=1}^{\infty}\eps_i^{-2}\left(|\gbig_{\ge i}| + |\gsma_{\ge i}|\right)\\
=&\sum_{i=1}^{\infty}\eps_i^{-2}\sum_{j=i}^{\infty}\left(|\gbig_{j}| + |\gsma_{j}|\right)\\
=&\sum_{j=1}^{\infty}\left(|\gbig_{j}| + |\gsma_{j}|\right)\sum_{i=1}^{j}\eps_i^{-2}\\
=&O\left(\sum_{j=1}^{\infty}\eps_j^{-2}\left(|\gbig_{j}| + |\gsma_{j}|\right)\right) = O(H)\text{.}
\end{split}\end{equation*}

\end{proof}

\end{document}